\theoremstyle{definition}
\newtheorem{theorem}{Theorem}
\newtheorem{lemma}[theorem]{Lemma}
\newtheorem{corollary}[theorem]{Corollary}
\newtheorem{proposition}[theorem]{Proposition}
\newtheorem{definition}{Definition}
\newtheorem{remark}{Remark}
\newcommand{\subscript}[2]{$#1 _ #2$}
\newlist{assumplist}{enumerate}{1}
\setlist[assumplist]{label=(\subscript{\textbf{A}}{{\arabic*}})}
\Crefname{assumplisti}{Assumption}{Assumptions}
\newlist{assumplist2}{enumerate}{1}
\setlist[assumplist2]{label=(\subscript{\textbf{B}}{{\arabic*}})}
\Crefname{assumplist2i}{Assumption}{Assumptions}
\newcommand\numberthis{\addtocounter{equation}{1}\tag{\theequation}}
\newcommand{\aknote}[1]{\todo[color=cyan!20]{#1}}
\newcommand{\cF}{{\mathcal F}}
\newcommand{\cP}{{\mathcal P}}
\newcommand{\X}{{\mathcal X}}
\newcommand{\cH}{{\mathcal H}}
\newcommand{\E}{{{\mathbb E}}}
\newcommand{\R}{{\mathbb R}}
\newcommand{\KL}{\mathop{\mathrm{KL}}\nolimits}
\newcommand{\ps}[1]{\langle #1 \rangle}
\title{A Non-Asymptotic Analysis for \\Stein Variational Gradient Descent}
\author{Anna Korba\\
	Gatsby Computational Neuroscience Unit\\
	University College London\\
	a.korba@ucl.ac.uk
  \And
  Adil Salim\\
  Visual Computing Center\\
  KAUST\\
  adil.salim@kaust.edu.sa
  \And Michael Arbel\\
  	Gatsby Computational Neuroscience Unit\\
  University College London\\
  michael.n.arbel@gmail.com
  \And Giulia Luise\\
  Computer Science Department\\
   University College London\\
   g.luise16@ucl.ac.uk
  \And Arthur Gretton\\
  	Gatsby Computational Neuroscience Unit\\
  University College London\\
arthur.gretton@gmail.com
}
\begin{document}

\maketitle

\begin{abstract}
	We study the Stein Variational Gradient Descent (SVGD) algorithm, which optimises a set of particles to approximate a target probability distribution $\pi\propto e^{-V}$ on $\R^d$. In the population limit, SVGD performs gradient descent in the space of probability distributions on the KL divergence with respect to $\pi$, where the gradient is smoothed through a kernel integral operator. In this paper, we provide a novel finite time analysis for the SVGD algorithm. We provide a descent lemma establishing that the algorithm decreases the objective at each iteration, and rates of convergence for the average Stein Fisher divergence (also referred to as Kernel Stein Discrepancy). We also provide a convergence result of the finite particle system corresponding to the practical implementation of SVGD to its population version.
\end{abstract}

\section{Introduction}

The task of sampling from a target distribution is common in Bayesian inference, where the distribution of interest is the posterior distribution of the parameters. Unfortunately, the posterior distribution is generally difficult to compute due to the presence of an intractable integral. This sampling problem can be formulated from an optimization point of view \citep{wibisono2018sampling}. We assume that the target distribution $\pi$ admits a density proportional to $\exp(-V)$ with respect to Lebesgue measure over
$\X= \R^d $, where $V:\X\to \R$ is referred to as the potential function. In this setting, the target distribution $\pi$ is the solution to the optimization problem defined on the set $\cP_2(\X)$ of probability measures $\mu$ such that $\int \|x\|^2 d\mu(x) < \infty$ by:
\begin{equation}\label{eq:minKL-1}
\min_{\mu \in\cP_2(\X)} \KL(\mu|\pi),
\end{equation}
where $\KL$ denotes the Kullback-Leibler divergence, and  assuming $\pi \in \cP_2(\X)$.  Many existing methods for the sampling task can be related to this optimization problem. Variants of the Langevin Monte Carlo algorithm \citep{durmus2016sampling,dalalyan2019user} can be seen as time-discretized schemes of the gradient flow of the relative entropy. These methods generate a Markov chain whose law converges to $\pi$ under mild assumptions, but the rates of convergence deteriorate quickly in high dimensions \citep{durmus2018analysis}. Variational inference methods instead restrict the search space of problem \eqref{eq:minKL-1} to a family of parametric distributions \citep{zhang2018advances,ranganath2014black}. These methods are much more tractable in the large scale setting, since they benefit from efficient optimization methods (parallelization, stochastic optimization); however they can only return an approximation of the target distribution.

Recently, the Stein Variational Gradient Descent (SVGD) algorithm \citep{liu2016stein} was introduced as a non-parametric alternative to variational inference methods. It uses a set of interacting particles to approximate the target distribution, and applies iteratively to these particles a form of gradient descent of the relative entropy, where the descent direction is restricted to belong to a unit ball in a Reproducing Kernel Hilbert space (RKHS) \citep{steinwart2008support}. In particular, this algorithm can be seen as a discretization of the gradient flow of the relative entropy on the space of probability distributions, equipped with a distance that depends on the kernel \citep{liu2017stein,duncan2019geometry}. 
The empirical performance of this algorithm and its variants have been largely demonstrated in various tasks in machine learning such as Bayesian inference \citep{liu2016stein,feng2017learning,liu2018riemannian,detommaso2018stein}, learning deep probabilistic models \citep{wang2016learning,pu2017vae}, or reinforcement learning \citep{liu2017policy}. In the limit of infinite  particles, the algorithm is known to converge  to the target distribution under appropriate growth assumptions on the potential  \citep{lu2019scaling}. Nonetheless, its non-asymptotic analysis remains incomplete:  in particular, to the best of our knowledge, quantitative rates of convergence have yet to be obtained. The present paper aims at answering this question. Our first contribution is to provide in the infinite-particle regime a descent lemma showing that SVGD decreases at each iteration for a sufficiently small but constant step-size, with an analysis different from \cite{liu2017stein}. We view this problem as an optimization problem over $\cP_2(\X)$ equipped with the Wasserstein distance, and use this  framework and optimization techniques to obtain our results. Our second contribution is to provide in the finite particle regime, a propagation of chaos  bound that quantifies the deviation of the empirical distribution of the particles to its population version.


This paper is organized as follows. \Cref{sec:preliminaries} introduces the background needed on optimal transport, while \Cref{sec:pres_svgd} presents the point of view adopted to study SVGD  in the infinite number of particles regime and reviews related work.
 \Cref{sec:continuous_time} studies the continuous time dynamics of SVGD. Our main result is presented in \Cref{sec:convergence}, where we provide a descent lemma and rates of convergence for the SVGD algorithm.
We also provide a convergence result of the finite particle system to its population version \Cref{sec:finite_particles}. 
The complete proofs and toy experiments are deferred to the appendix.

\section{Preliminaries on optimal transport}\label{sec:preliminaries}
Let $\X=\R^d$. We denote by $C^l(\X)$ the space of $l$ continuously differentiable functions on $\X$. If $\psi : \X \to \R^p$, $p \geq 0$, is differentiable, we denote by $J \psi : \X \to \R^{p \times d}$ the Jacobian matrix of $\psi$. If $p = 1$, the gradient of $\psi$ denoted $\nabla \psi$ is seen as a column vector. Moreover, if $\nabla \psi$ is differentiable, the Jacobian of $\nabla \psi$ is the Hessian of $\psi$ denoted $H_{\psi}$. If $p = d$, $div(\psi)$ denotes the divergence of $\psi$, \textit{i.e.}, the trace of the Jacobian. The Hilbert-Schmidt norm of a matrix is denoted $\|\cdot\|_{HS}$ and the operator norm denoted $\|\cdot\|_{op}$.
\vspace{-2mm}
\subsection{The Wasserstein space and the continuity equation}
\vspace{-2mm}
In this section, we recall some background from optimal transport. The reader may refer to~\cite{ambrosio2008gradient} for more details.

Consider the set $\cP_2(\X)$ of probability measures $\mu$ on $\X$ with finite second order moment. For any $\mu \in \cP_2(\X)$, $L^2(\mu)$ is the space of functions $f : \X \to \X$ such that $\int \|f\|^2 d\mu < \infty$. 
If $\mu \in \cP_2(\X)$, we denote by $\Vert \cdot \Vert_{L^2(\mu)}$ and $\ps{\cdot,\cdot}_{L^2(\mu)}$ respectively the norm and the inner product of the Hilbert space $L^2(\mu)$. 
Given a measurable map $T:\X\to \X$ and $\mu\in \cP_2(\X)$, we denote by $T_{\#}\mu$ the pushforward
measure of $\mu$ by $T$, characterized by the transfer lemma $\int \phi(T(x))d\mu(x)=\int \phi(y)dT_{\#}\mu(y)$, 
for any measurable and bounded function $\phi$.
Consider $\mu,\nu \in \cP_2(\X)$, the 2-nd order Wasserstein distance is defined by
$W_2^2 (\mu, \nu) = \inf_{s \in \mathcal{S}(\mu,\nu)} \int \|x-y\|^2 ds(x,y)$, 
where $\mathcal{S}(\mu,\nu)$ is the set of couplings between $\mu$ and $\nu$, \textit{i.e.} the set of nonnegative measures $s$ over $\X \times \X$ such that $P_{\#} s = \mu$ (resp. $Q_{\#} s = \nu$) where $P : (x,y) \mapsto x$ (resp. $Q : (x,y) \mapsto y$) denotes the projection onto the first (resp. the second) component. The Wasserstein distance is a distance over $\cP_2(\X)$. The metric space $(\cP_2(\X),W_2)$ is called the Wasserstein space. 

Let $T > 0$. Consider a weakly continuous map $\mu : (0,T) \to \cP_2(\X)$. The family $(\mu_t)_{t \in (0,T)}$ satisfies a continuity equation if there exists $(v_t)_{t \in (0,T)}$ such that $v_t \in L^2(\mu_t)$ and 
\begin{equation}
    \frac{\partial \mu_t}{\partial t} + div(\mu_t v_t) = 0
\end{equation}
holds in the distributional sense. A family $(\mu_t)_t$ satisfying a continuity equation with $\|v_t\|_{L^2(\mu_t)}$ integrable over $(0,T)$ is said absolutely continuous. Among the possible processes $(v_t)_t$, one has a minimal $L^2(\mu_t)$ norm and is called the velocity field of $(\mu_t)_t$. In a Riemannian interpretation of the Wasserstein space \citep{otto2001geometry}, this minimality condition can be characterized by $v_t$ belonging to the tangent space to $\cP_2(\X)$ at $\mu_t$ denoted $T_{\mu_t}\cP_2(\X)$, which is a subset of $L^2(\mu_t)$. 
\vspace{-2mm}
\subsection{A functional defined over the Wasserstein space}
\vspace{-2mm}
Consider $\pi \propto \exp(-V)$ where $V:\X \to \R$ is a smooth function, i.e. $V$ is $C^2(\X)$ and its Hessian $H_V$ is bounded from above.
For any $\mu, \pi \in \cP_2(\X)$, the Kullback-Leibler divergence of $\mu$ w.r.t. $\pi$ is defined by
\begin{equation*}
    \KL(\mu|\pi) = \int \log\left(\frac{d\mu}{d\pi}(x)\right)d\mu(x)
\end{equation*}
if $\mu$ is absolutely continuous w.r.t. $\pi$ with Radon-Nikodym density $d\mu/d\pi$, and $\KL(\mu|\pi) = +\infty$ otherwise. Consider the functional
$\KL(.|\pi) : \cP_2(\X) \to [0,+\infty)$, $\mu \mapsto \KL(\mu|\pi)$  
defined over the Wasserstein space. We shall perform differential calculus over this space for such a functional, which is a "powerful way of computing" \citep[Section 8.2]{villani2003topics}.
If $\mu \in \cP_2(\X)$ satisfies some mild regularity conditions, the (Wasserstein) gradient of $\KL(.|\pi)$ at $\mu$ is denoted by $\nabla_{W_2} \KL(\mu|\pi) \in L^2(\mu)$ and defined by $\nabla \log\left(\frac{d\mu}{d\pi}\right)$. Moreover, the (Wasserstein) Hessian of $\KL(.|\mu)$ at $\mu$ is an operator over $T_\mu \cP_2(\X)$ defined by 
\begin{equation}
    \ps{v,H_{\KL(.|\pi)}(\mu) v}_{L^2(\mu)} = \E_{X \sim \mu} \left[\ps{v(X), H_V(X) v(X)}+\|J v(X)\|_{HS}^2\right]
\end{equation}
for any tangent vector $v \in T_\mu \cP_2(\X)$. Note that the Hessian of $\KL(.|\pi)$ is not bounded from above.  
An important property of the Wasserstein gradient is that it satisfies a chain rule. Let $(\mu_t)_t$ be an absolutely continuous curve s. t. $\mu_t$ has a density. Denote $(v_t)$ the velocity field of $(\mu_t)$. If $\varphi(t) = \KL(\mu_t|\pi)$, then under mild technical assumptions  $\varphi'(t) = \ps{v_t,\nabla_{W_2}\KL(\mu_t|\pi)}_{L^2(\mu_t)}$ (see~\cite{ambrosio2008gradient}).



\section{Presentation of Stein Variational Gradient Descent (SVGD)}
\label{sec:pres_svgd}
In this section, we present our point of view on SVGD in the infinite number of particles regime.

\vspace{-2mm}
\subsection{Kernel integral operator}
\vspace{-2mm}
Consider a positive semi-definite kernel $k : \X \times \X \to \R$ and $\cH_0$ its corresponding RKHS of real-valued functions on $\X$. 
The space $\mathcal{H}_0$ is a Hilbert space with inner product $\ps{\cdot,\cdot}_{\cH_0}$ and norm $\Vert \cdot \Vert_{\cH_0}$ (see \cite{smola1998learning}). 
Moreover, $k$ satisfies the reproducing property: $
\forall \; f \in \cH_0,\; f(x)=\ps{f,k(x,.)}_{\cH_0}$.  Denote by $\cH$ the product RKHS consisting of elements $f=(f_1, \dots, f_d)$ with $f_i\in \cH_0$, and with a standard inner product $\ps{f,g}_{\cH}=\sum_{i=1}^d\ps{f_i,g_i}_{\cH_0}$.
Let $\mu \in \cP_2(\X)$; 
the integral operator associated to kernel $k$ and measure $\mu$ denoted by
$S_{\mu} : L^2(\mu) \rightarrow \cH$ is 
\begin{equation}\label{eq:integral_operator}
S_{\mu}f  = \int k(x,\cdot)f(x)d\mu(x).
\end{equation}
We make the key assumption that $\int k(x,x) d\mu(x)<\infty$ for any $\mu\in \cP_2(\X)$; which implies that $\cH \subset L^2(\mu)$. Consider functions $f,g\in L_2(\mu)\times\cH$ and denote the inclusion $\iota:\cH\to L^2(\mu),$ with $\iota^*=S_{\mu}$ its adjoint.
Then following e.g. \cite[Chapter 4]{steinwart2008support}, we have
\begin{equation}\label{eq:int_operator_ps}
\hspace{-0.3cm}  \ps{ f,\iota g}_{L^2(\mu)}= \ps{\iota^*f, g}_{\cH}=  \ps{S_{\mu}f,g}_{\cH}.
\end{equation}
When the kernel is integrally strictly positive definite, then  $\cH$  is dense in  $L^2(\mu)$ for any probability measure $\mu$ \citep{sriperumbudur2011universality}. We also define $P_{\mu}:L^2(\mu)\to L^2(\mu)$ the operator $P_{\mu}=\iota S_{\mu}$; notice that it differs from $S_{\mu}$ only in its range.
\vspace{-2mm}
\subsection{Stein Variational Gradient Descent}
\vspace{-2mm}
We can now present the Stein Variational Gradient Descent (SVGD) algorithm~\citep{liu2016stein}. The goal of this algorithm is to provide samples from a target distribution $\pi \propto \exp(-V)$ with positive density w.r.t. Lebesgue measure and known up to a normalization constant. Several point of views on SVGD have been adopted in the literature. In this paper, we view SVGD as an optimization algorithm~\citep{liu2017stein} to minimize the Kullback-Leibler (KL) divergence w.r.t. $\pi$, see Problem~\eqref{eq:minKL-1}.  
Denote $\KL(.|\pi) : \cP_2(\X) \to [0,+\infty)$ the functional $\mu \mapsto \KL(\mu|\pi)$. 
More precisely, in order to obtain samples from $\pi$, SVGD applies a gradient descent-like algorithm  to the functional $\KL(.|\pi)$. The standard gradient descent algorithm in the Wasserstein space applied to $\KL(.|\pi)$, at each iteration $n\ge 0,$ is
\begin{equation}
    \label{eq:GD}
    \mu_{n+1} = \left(I -\gamma \nabla \log\left(\frac{\mu_n}{\pi} \right)\right)_{\#} \mu_n,
\end{equation}
where $\gamma>0$ is a step size and $I$ the identity map. This corresponds to a forward Euler discretization of the gradient flow of $\KL(.|\pi)$ ~\citep{wibisono2018sampling}, and can be seen as a Riemannian gradient descent where the exponential map at $\mu$ is the map $\phi \mapsto (I+\phi)_{\#}\mu$ defined on $L^2(\mu)$.
Therefore, the gradient descent algorithm would require to estimate the density of $\mu_n$ based on samples, which can be demanding (though see \Cref{rk:langevin} below). We next examine the analogous SVGD iteration, 
\begin{equation}\label{eq:svgd_update}
    \mu_{n+1}=\left(I-\gamma P_{\mu_n}\nabla \log\left(\frac{\mu_n}{\pi}\right)\right)_{\#}\mu_n.
\end{equation}
Instead of using $\nabla_{W_2}\KL(\mu_n|\pi)$ as the gradient, SVGD uses $P_{\mu_n} \nabla_{W_2} \KL(\mu_n|\pi).$ This can be seen as the gradient of $\KL(.|\pi)$ under the inner product of $\cH,$ since $
    \ps{S_{\mu} \nabla_{W_2} \KL(\mu|\pi),v}_{\cH} = \ps{\nabla_{W_2} \KL(\mu|\pi), \iota v}_{L^2(\mu)}
$
for any $v \in \cH$. The important fact is that given samples of $\mu$, the evaluation of $P_{\mu}\nabla_{W_2} \KL(\mu|\pi)$ is simple. Indeed if $\lim_{\|x\|\to \infty} k(x,.)\pi(x)=0$,
\begin{equation}\label{eq:svgd_vector_field}
    P_{\mu}\nabla \log\left(\frac{\mu}{\pi}\right)(\cdot)=-\int[ \nabla \log\pi(x)k(x,\cdot)+\nabla_x k(x,\cdot)]d\mu(x),
\end{equation}
using an integration by parts (see~\cite{liu2017stein}).
\begin{remark}\label{rk:langevin}
An alternative sampling algorithm which does not imply to compute the exact gradient of the KL is the Unadjusted Langevin Algorithm (ULA). It is an implementable algorithm that computes a gradient step with $\nabla \log \pi$, and a flow step adding a Gaussian noise to the particles. However, it is not a gradient descent discretization; it rather corresponds to performing a Forward-Flow (FFl) discretization, which  is biased \cite[Section 2.2.2]{wibisono2018sampling}. 
\end{remark}
\vspace{-2mm}
\subsection{Stein Fisher information}\label{sec:KSD}
\vspace{-2mm}
The squared RKHS norm of the gradient $S_{\mu} \nabla \log(\frac{\mu}{\pi})$ is defined as the Stein Fisher Information:
\begin{definition}
	Let $\mu \in \cP_2(\X)$. The \textit{Stein Fisher Information} of $\mu$ relative to $\pi$ \cite{duncan2019geometry} is defined by  :
	\begin{equation}\label{eq:I_stein}
	I_{Stein}(\mu|\pi)= \|S_{\mu}\nabla \log\left(\frac{\mu}{\pi}\right)\|^2_{\cH}.
	\end{equation}
\end{definition}
\begin{remark}
	Notice that since $P_{\mu}=\iota S_{\mu}$ with $\iota^{*}=S_{\mu}$, we can write $I_{Stein}(\mu|\pi)=\ps{\nabla \log(\frac{\mu}{\pi}), P_{\mu}\nabla \log(\frac{\mu}{\pi})}_{L^2(\mu)}$.
\end{remark}
In the literature the quantity \eqref{eq:I_stein} is also referred to as the squared Kernel Stein Discrepancy (KSD), used in nonparametric statistical tests for goodness-of-fit \citep{liu2016kernelized, chwialkowski2016kernel, gorham2017measuring}. The KSD provides a discrepancy between probability distributions, which depends on $\pi$ only through the score function $\nabla \log \pi$ that can be calculated without knowing the normalization constant of $\pi$. 
Whether the convergence of the KSD to zero, i.e. $I_{stein}(\mu_n|\pi) \to 0$ when $n\to \infty$ implies the weak convergence of $(\mu_n)$ to $\pi$ (denoted $\mu_n \to \pi$) depends on the choice of the kernel relatively to the target. This question has been treated in~\cite{gorham2017measuring}. Sufficient conditions include $\pi$ being distantly dissipative \footnote{i.e. such that $ \lim \inf_{r \to \infty}   \kappa(r) > 0$ for $\kappa(r) =
	\inf \{ -2 \ps{ \nabla \log \pi (x)- \nabla \log \pi(y),x-y }/\|x-y\|_2^2 ; \|x-y\|_2=r
	\}$. This includes finite Gaussian
	mixtures with common covariance and all distributions
	strongly log-concave outside of a compact set, including
	Bayesian linear, logistic, and Huber regression posteriors
	with Gaussian priors.} which is similar to strong log concavity outside a bounded domain, and the kernel having a slow decay rate (e.g. being translation invariant with a non-vanishing Fourier transform, or $k$ being the inverse multi-quadratic kernel defined by $k(x,y)=(c^2 + \|x- y\|_2
^2)^{\beta}$ for $c >0$
and $\beta \in [-1, 0]$). 
In these cases, $I_{stein}(\mu_n|\pi) \to 0$ implies $\mu_n \to \pi$.

In order to study the continuous time dynamics of SVGD,~\cite{duncan2019geometry} introduced a kernel version of a log-Sobolev inequality (which usually upper bounds the KL by the Fisher divergence \citep{vempala2019rapid}).
\begin{definition}
	We say that $\pi$ satisfies the \textit{Stein log-Sobolev inequality} with constant $\lambda >0$ if:
	\begin{equation}\label{eq:stein_log_sobolev}
	\KL(\mu|\pi)\le \frac{1}{2\lambda} I_{Stein}(\mu|\pi).
	\end{equation}
\end{definition}
The functional inequality \eqref{eq:stein_log_sobolev} is not as well known and understood as the classical log-Sobolev inequality.\footnote{i.e. $\KL(\mu|\pi)\le 1/2\lambda \| \nabla \log(\frac{\mu}{\pi})\|^2_{L^2(\mu)}$, which holds for instance as soon as $V$ is $\lambda$-strongly convex.} \cite{duncan2019geometry} provided a first  investigation into when this condition might hold. They  show
 that it {\em fails} to hold if the kernel is too regular w.r.t. $\pi$, more precisely for $k\in C^{1,1}(\X \times \X)$, and if $\sum_{i=1}^d [
(\partial_iV (x))^2
k(x, x) -\partial_i V(x)(
\partial_
i^1 k(x, x) + \partial^
2_i k(x, x))
+  \partial_i^{1}
\partial_i^{2} k(x, x)]d\pi(x) < \infty$, where $\partial^1_i$ and  $\partial^2_i$
denote derivatives with respect to the first and second argument of $k$ respectively \cite[Lemma 36]{duncan2019geometry}. This holds for instance in the case where
$\pi$ has exponential tails and the derivatives of $k$ and $V$ grow at most at a polynomial rate. However, they provide interesting cases in dimension 1 where \eqref{eq:stein_log_sobolev} holds, depending on $k$ and $\pi$.  For instance, by choosing a nondifferentiable kernel that
is adapted to the tails of the target $k(x, y) = \pi(x)^{-1/2}e^{-|x-y|}\pi(y)^{-1/2}$, and if $V''(x) + (V'(x))^2/2\ge \tilde{\lambda}>0$ for any $x\in \R$, then \eqref{eq:stein_log_sobolev} holds with $\lambda=\min(1,\tilde{\lambda})$ \citep[Example 40]{duncan2019geometry}.  
Conditions where \eqref{eq:stein_log_sobolev} holds in higher dimensions are more challenging to establish, and are a topic of current research.

\subsection{Related work}

SVGD was originally introduced by \cite{liu2016stein},
 and was shown empirically to be competitive  with 
 state-of-the-art methods in Bayesian inference.
 \cite
{liu2017stein}  developed the first 
theoretical analysis and studied the weak 
convergence properties of SVGD. 
They showed that for any iteration, the empirical 
distribution of the SVGD samples (i.e., for a 
finite number of particles) weakly converges to the 
population distribution when the number of particles 
goes to infinity. In the infinite particle 
regime, they provided a descent lemma showing that 
the KL objective decreases at each iteration (see \Cref{rk:comparison_liu}).   
Finally, they 
derived the non-linear partial differential
equation (PDE)  that governs continuous time dynamics of 
SVGD, and provided a geometric intuition that 
interprets SVGD as a gradient flow of the KL divergence 
under a new Riemannian metric structure (the \textit
{Stein geometry}) induced by the kernel. \cite
{liu2018stein} studied the fixed point 
properties of the algorithm for a finite number of 
particles, and showed that it exactly estimates 
expectations under the target distribution, for a 
set of functions called the Stein matching set, 
that are determined by the Stein operator 
(depending on the target distribution) and the 
kernel. In particular, they showed that by choosing 
linear kernels, SVGD can exactly estimate the mean 
and variance of Gaussian distributions when the 
number of particles is greater than the dimension. 
They further  derived high probability bounds that 
bound the Kernel Stein Discrepancy  between the empirical 
distribution and the target measure when the kernel 
is approximated with random features. \cite
{lu2019scaling} studied the continuous 
time dynamics of SVGD in the infinite number of 
particles regime. They showed that the PDE governing 
continuous-time, infinite sample SVGD dynamics is 
well-posed, and that the law of the particle system 
(for a finite number of particles) is a weak 
solution of the equation, under appropriate growth 
conditions on the score function $\nabla \log \pi$, 
and they studied the regularity of the PDE. Finally,
 \cite{duncan2019geometry} investigated the 
contraction and equilibration properties of this 
PDE. In particular, they proposed conditions that induce  exponential 
convergence to the equilibrium in continuous time, notably as the Stein log-Sobolev inequality, which relates 
the convexity of the KL objective to the \textit
{Stein geometry} 
(see \Cref{sec:continuous_time}).
By contrast with \cite{lu2019scaling,duncan2019geometry},
we develop a
theoretical understanding of SVGD in {\em discrete time}, 
where to our knowledge rates of convergence have yet to be established.

\vspace{-2mm}
\section{Continuous-time dynamics of SVGD}\label{sec:continuous_time}
\vspace{-2mm}
This section defines and describes the SVGD dynamics in continuous time. Some of the results are already stated in \cite{liu2017stein} and \cite{duncan2019geometry} but are necessary to understand the discrete time analysis. We provide intuitive sketches of the proof ideas in the main document, which exploit the differential calculus over the Wasserstein space. Detailed proofs are given in the Appendix. 

The SVGD gradient flow is defined as the flow induced by the continuity equation~\citep{liu2017stein}:
\begin{equation}\label{eq:svgd_flow}
\frac{\partial \mu_t}{\partial t}+div(\mu_t v_t ) =0, \qquad v_t := - P_{\mu_t}\nabla \log \left(\frac{\mu_t}{\pi}\right).
\end{equation}
Equation \eqref{eq:svgd_flow} was shown to admit a unique and well defined solution (given an initial condition $\mu_0\in \cP_2(\X)$) provided that some smoothness and growth assumptions on both kernel and target density $\pi$ are satisfied \citep{lu2019scaling}. Notice that the SVGD update \eqref{eq:svgd_update} is a forward Euler discretization of \eqref{eq:svgd_flow}. We propose to study the dissipation of the KL along the trajectory of the SVGD gradient flow. The Stein Fisher Information turns out to be the quantity that quantifies this dissipation, as stated in the next proposition.
\begin{proposition}\label{prop:dissipation} 
	The dissipation of the $\KL$ along the SVGD gradient flow \eqref{eq:svgd_flow} is:
	\begin{equation}\label{eq:dissipation_KL}
	\frac{d\KL(\mu_t|\pi)}{dt}=-I_{Stein}(\mu_t|\pi).\qedhere
	\end{equation}
\end{proposition}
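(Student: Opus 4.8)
The plan is to differentiate $\varphi(t) := \KL(\mu_t|\pi)$ directly, using the chain rule for the Wasserstein gradient recalled in \Cref{sec:preliminaries}. Since $(\mu_t)_t$ solves the continuity equation \eqref{eq:svgd_flow} with velocity field $v_t = -P_{\mu_t}\nabla \log(\mu_t/\pi)$, and since the Wasserstein gradient of the $\KL$ is $\nabla_{W_2}\KL(\mu_t|\pi) = \nabla \log(\mu_t/\pi)$, the chain rule would give
\begin{equation*}
\varphi'(t) = \ps{v_t,\nabla_{W_2}\KL(\mu_t|\pi)}_{L^2(\mu_t)} = -\ps{P_{\mu_t}\nabla \log(\mu_t/\pi),\,\nabla \log(\mu_t/\pi)}_{L^2(\mu_t)}.
\end{equation*}
So the entire computation reduces to identifying this $L^2(\mu_t)$ inner product with the Stein Fisher Information.

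For that second step I would use the factorization $P_{\mu_t}=\iota S_{\mu_t}$ together with the adjoint relation \eqref{eq:int_operator_ps}. Writing $f=\nabla\log(\mu_t/\pi)$ and applying $\ps{f,\iota g}_{L^2(\mu_t)}=\ps{S_{\mu_t}f,g}_{\cH}$ with the choice $g=S_{\mu_t}f$, I obtain
\begin{equation*}
\ps{f,P_{\mu_t}f}_{L^2(\mu_t)} = \ps{f,\iota S_{\mu_t}f}_{L^2(\mu_t)} = \ps{S_{\mu_t}f,S_{\mu_t}f}_{\cH} = \|S_{\mu_t}f\|_{\cH}^2 = I_{Stein}(\mu_t|\pi),
\end{equation*}
which is precisely the identity recorded in the remark following the definition of $I_{Stein}$. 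Combining the two displays yields $\varphi'(t)=-I_{Stein}(\mu_t|\pi)$, as claimed. This algebraic part is routine once the operator identities are in hand.

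The main obstacle is justifying the chain rule rather than the algebra. Concretely, one must check the ``mild technical assumptions'' under which $\varphi'(t)=\ps{v_t,\nabla_{W_2}\KL(\mu_t|\pi)}_{L^2(\mu_t)}$ holds: that $(\mu_t)_t$ is an absolutely continuous curve admitting a density, that $\nabla\log(\mu_t/\pi)$ is well-defined, and that $v_t\in L^2(\mu_t)$. The last point follows from the key assumption $\int k(x,x)\,d\mu_t(x)<\infty$, giving $\cH\subset L^2(\mu_t)$ and hence $v_t=-P_{\mu_t}\nabla\log(\mu_t/\pi)\in\cH\subset L^2(\mu_t)$; the existence and regularity of the solution $(\mu_t)_t$ can be invoked from \cite{lu2019scaling} under their growth and smoothness conditions on $k$ and $V$. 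A subtlety worth flagging is that the chain rule is stated for the \emph{velocity field} (the minimal-norm process), whereas $v_t$ lives in $\cH$ and need not be of minimal norm; however, since $\nabla_{W_2}\KL(\mu_t|\pi)$ belongs to the tangent space $T_{\mu_t}\cP_2(\X)$ and any admissible $v_t$ differs from the velocity field by a component orthogonal to that tangent space, the inner product $\ps{v_t,\nabla_{W_2}\KL(\mu_t|\pi)}_{L^2(\mu_t)}$ is unaffected by this choice, so the displayed identity still applies. I would therefore state these regularity hypotheses explicitly (deferring their verification to the Appendix) and present the two displays above as the core of the argument.
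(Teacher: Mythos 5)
Your proposal is correct and follows essentially the same route as the paper's own proof: the Wasserstein chain rule gives $\varphi'(t)=\ps{v_t,\nabla\log(\mu_t/\pi)}_{L^2(\mu_t)}$, and the factorization $P_{\mu_t}=\iota S_{\mu_t}$ with the adjoint relation \eqref{eq:int_operator_ps} turns this into $-\|S_{\mu_t}\nabla\log(\mu_t/\pi)\|_{\cH}^2=-I_{Stein}(\mu_t|\pi)$. Your extra care about regularity and about the inner product being insensitive to the non-minimal-norm choice of $v_t$ is sound, and is consistent with the paper's appendix, which justifies the dissipation identity for any admissible vector field of the continuity equation via integration by parts.
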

\begin{proof}
	Recall that $\nabla_{W_2}\KL(\mu|\pi)=\nabla \log(\frac{\mu}{\pi})$; using differential calculus in the Wasserstein space and the chain rule we have, \begin{equation*}
				\frac{d\KL(\mu_t|\pi)}{dt}= \left \langle  v_t,\nabla \log\left(\frac{\mu_t}{\pi}\right)\right \rangle_{L^2(\mu_t)} =
				 -\left\| S_{\mu_t}\nabla \log\left(\frac{\mu_t}{\pi}\right)\right\|_{\cH}^2.\qedhere
	\end{equation*}
\end{proof}
Since $I_{Stein}(\mu|\pi)$ is nonnegative, \Cref{prop:dissipation} shows that the KL divergence with respect to $\pi$ decreases along the SVGD dynamics, i.e. the KL is a Lyapunov functional for the PDE \eqref{eq:svgd_flow}. 
It can actually be proven that $I_{Stein}(\mu_t|\pi)\rightarrow 0 $, as stated in the following proposition. Its proof is deferred to \Cref{sec:cont_time_convergence}.

\begin{proposition}\label{prop:cont_time_convergence}
	Let $\mu_t$ be a solution of \eqref{eq:svgd_flow}. Assume \Cref{ass:k_bounded}, \ref{ass:V_Lipschitz}, hold and that $\exists C>0$ such that $\int \|x\| d\mu_t(x)<C$ for all $t\ge0$.
	Then $I_{Stein}(\mu_t|\pi) \rightarrow 0$.
\end{proposition}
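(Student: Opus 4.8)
The plan is to combine the exact dissipation identity of \Cref{prop:dissipation} with a compactness-type argument à la Barbalat. First I would integrate \eqref{eq:dissipation_KL} in time: since $t \mapsto \KL(\mu_t|\pi)$ is nonnegative and nonincreasing,
\begin{equation*}
\int_0^T I_{Stein}(\mu_t|\pi)\,dt = \KL(\mu_0|\pi) - \KL(\mu_T|\pi) \le \KL(\mu_0|\pi),
\end{equation*}
so that letting $T \to \infty$ shows $t \mapsto I_{Stein}(\mu_t|\pi)$ is integrable on $[0,\infty)$. By itself this only forces $I_{Stein}(\mu_t|\pi) \to 0$ along a subsequence; to obtain the full limit it remains to prove that $t \mapsto I_{Stein}(\mu_t|\pi)$ is uniformly continuous, after which Barbalat's lemma (a nonnegative, integrable, uniformly continuous function on $[0,\infty)$ tends to $0$) concludes.

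For the uniform continuity I would exploit the RKHS representation $I_{Stein}(\mu_t|\pi) = \|f_t\|_{\cH}^2$, where by the integration by parts \eqref{eq:svgd_vector_field},
\begin{equation*}
f_t(\cdot) = S_{\mu_t}\nabla\log\left(\tfrac{\mu_t}{\pi}\right) = -\int \big[\nabla\log\pi(x)\,k(x,\cdot) + \nabla_x k(x,\cdot)\big]\,d\mu_t(x) \in \cH.
\end{equation*}
Using \Cref{ass:k_bounded} (boundedness of $k$ and its derivatives, so that $\|k(x,\cdot)\|_{\cH} = k(x,x)^{1/2}$ and $\|\nabla_x k(x,\cdot)\|_{\cH}$ are bounded) together with \Cref{ass:V_Lipschitz} (which bounds $\nabla\log\pi = -\nabla V$), the triangle inequality gives a uniform-in-$t$ bound on $\|f_t\|_{\cH}$, hence on $I_{Stein}(\mu_t|\pi)$; the reproducing property also yields $\|v_t\|_\infty \lesssim \|v_t\|_{\cH} = \sqrt{I_{Stein}(\mu_t|\pi)}$, so $v_t$ is uniformly bounded as well.

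I would then bound the time derivative $\tfrac{d}{dt}I_{Stein}(\mu_t|\pi) = 2\ps{f_t,\dot f_t}_{\cH}$. Differentiating $f_t$ through the continuity equation \eqref{eq:svgd_flow} produces
\begin{equation*}
\dot f_t(\cdot) = -\int J_x\!\big[\nabla\log\pi(x)\,k(x,\cdot) + \nabla_x k(x,\cdot)\big]\,v_t(x)\,d\mu_t(x),
\end{equation*}
whose integrand involves $H_V$, the first and second derivatives of $k$, and $v_t$. Controlling these factors via \Cref{ass:k_bounded}, the bounds on $V$ in \Cref{ass:V_Lipschitz}, the uniform bound on $\|v_t\|_\infty$ from the previous step, and the uniform first-moment bound $\int \|x\|\,d\mu_t < C$ (which I use to keep the integrals finite and to justify differentiating under the integral sign) yields a bound on $\|\dot f_t\|_{\cH}$ that is uniform in $t$. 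Hence $t \mapsto I_{Stein}(\mu_t|\pi)$ is Lipschitz, a fortiori uniformly continuous, and Barbalat's lemma gives $I_{Stein}(\mu_t|\pi) \to 0$.

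I expect the main obstacle to be precisely this last estimate: making rigorous the differentiation of $f_t$ along the flow and, above all, bounding $\|\dot f_t\|_{\cH}$ uniformly in $t$. This is where the (generally unbounded) Wasserstein Hessian of the KL re-enters through the $H_V$ and second-derivative-of-$k$ terms, and where the boundedness of the kernel derivatives, the regularity of $V$, and the uniform moment bound have to be combined carefully; an alternative that circumvents the explicit $\dot f_t$ computation is to note that $v_t$ being uniformly bounded makes $t\mapsto\mu_t$ Lipschitz in $W_2$, and then to show that $\mu \mapsto I_{Stein}(\mu|\pi)$ is locally Lipschitz in $W_2$ under the same assumptions.
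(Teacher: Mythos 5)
Your proposal is correct and follows essentially the same route as the paper: integrate the dissipation identity of \Cref{prop:dissipation} to get $\int_0^\infty I_{Stein}(\mu_t|\pi)\,dt \le \KL(\mu_0|\pi)$, show that $t \mapsto I_{Stein}(\mu_t|\pi)$ is uniformly Lipschitz in time, and conclude by a Barbalat-type argument --- the paper does not invoke Barbalat by name but reproves it (given $t_k \to \infty$ with $I_{Stein}(\mu_{t_k}|\pi) > \varepsilon$, uniform Lipschitzness yields disjoint intervals of length $\varepsilon/L$ on which $I_{Stein} \ge \varepsilon/2$, contradicting integrability). The step you defer, bounding the time derivative, is exactly the content of the paper's \Cref{eq:semi-convexity}: differentiating $v_t = S_{\mu_t}\nabla\log(\frac{\mu_t}{\pi})$ componentwise through the continuity equation, integrating by parts, and using the reproducing property to rewrite $\frac{d}{dt}I_{Stein}(\mu_t|\pi)$ as a quadratic form $2\sum_{i,j}\ps{v_t, A_{i,j} v_t}_{\cH_0}$ whose operators $A_{i,j}$ have Hilbert--Schmidt norm at most $B^2(M+1+MC+|\partial_i\log\pi(0)|)$; this quadratic-form device is how the paper deals with the obstacle you flag, since only the $\cH_0$-norms of $k(x,\cdot)$ and $\nabla_x k(x,\cdot)$ from \Cref{ass:k_bounded} enter the final bound rather than RKHS norms of second derivatives of $k$ (though the rewriting itself, which implicitly uses derivative reproducing properties, is not fully spelled out in the paper either). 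One genuine divergence: the paper's derivative bound is relative, $|\frac{d}{dt}I_{Stein}(\mu_t|\pi)| \le \lambda I_{Stein}(\mu_t|\pi)$, so to convert it into uniform Lipschitzness the paper additionally needs $\sup_t I_{Stein}(\mu_t|\pi) < \infty$, which it obtains via weak coercivity of $\KL(\cdot|\pi)$ (compact sublevel sets) plus weak continuity of $I_{Stein}(\cdot|\pi)$; you instead bound $\sup_t \|f_t\|_{\cH}$ directly from the integral representation, which also works, is arguably more self-contained, and mirrors what the paper's \Cref{lem:bounded_stein} does for the discrete iterates. One small correction to your write-up: \Cref{ass:V_Lipschitz} bounds $H_V$, not $\nabla\log\pi$, so your uniform bound on $\|f_t\|_{\cH}$ must combine the linear growth estimate $\|\nabla V(x)\| \le \|\nabla V(0)\| + M\|x\|$ with the first-moment hypothesis $\int \|x\|\,d\mu_t(x) < C$ --- this, and not merely ``keeping integrals finite,'' is where the moment assumption is essential, exactly as in the paper's bound on $\|A_{i,j}\|_{HS}$.
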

\begin{remark}
In the proof of~\citet[Theorem 2.8]{lu2019scaling}, the authors show that $\mu_t$ converges weakly towards $\pi$ when $V$ grows at most polynomially. However, they implictly assumed that $I_{Stein}(\mu_t|\pi)\rightarrow 0$ which does not need to be true in general \citep{lesigne2010behavior}. It can actually be proven that $I_{Stein}(\mu_t|\pi)\rightarrow 0 $ by controlling the oscillation of the $I_{Stein}(\mu_t|\pi)$ in time, using a semi-convexity result on the KL.
\end{remark}

A second consequence of \Cref{prop:dissipation} is the following continuous time convergence rate for the average of $I_{Stein}(\mu_t|\pi)$. It is obtained immediately by integrating \eqref{eq:dissipation_KL} and using the positivity of the KL.
\begin{proposition} For any $t\ge 0$,
	\begin{equation}
		 \min_{0 \leq s \leq t} I_{Stein}(\mu_s|\pi) \leq \frac{1}{t} \int_{0}^{t} I_{Stein}(\mu_s|\pi)ds \leq \frac{\KL(\mu_0|\pi)}{t}. 
	\end{equation}
\end{proposition}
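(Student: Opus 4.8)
The plan is to integrate the dissipation identity of \Cref{prop:dissipation} over the interval $[0,t]$ and then combine two elementary facts: the nonnegativity of the KL divergence, and the observation that the minimum of an integrable function over an interval is at most its time-average.

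First I would apply \Cref{prop:dissipation}, which states $\frac{d\KL(\mu_s|\pi)}{ds}=-I_{Stein}(\mu_s|\pi)$ along the flow \eqref{eq:svgd_flow}. Integrating this identity from $s=0$ to $s=t$ and using the fundamental theorem of calculus gives
\begin{equation*}
\int_0^t I_{Stein}(\mu_s|\pi)\,ds = \KL(\mu_0|\pi) - \KL(\mu_t|\pi).
\end{equation*}
Since $\KL(\mu_t|\pi)\ge 0$ by definition of the relative entropy, the right-hand side is bounded above by $\KL(\mu_0|\pi)$. Dividing through by $t>0$ then yields the rightmost inequality $\frac1t\int_0^t I_{Stein}(\mu_s|\pi)\,ds \le \frac{\KL(\mu_0|\pi)}{t}$.

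For the leftmost inequality I would use that for every $s\in[0,t]$ the pointwise bound $\min_{0\le u\le t} I_{Stein}(\mu_u|\pi) \le I_{Stein}(\mu_s|\pi)$ holds. Integrating this constant lower bound over $[0,t]$ and dividing by $t$ gives $\min_{0\le s\le t} I_{Stein}(\mu_s|\pi) \le \frac1t\int_0^t I_{Stein}(\mu_s|\pi)\,ds$, which chains together with the previous display to produce the full statement.

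There is essentially no obstacle of substance here; the result is a direct corollary of \Cref{prop:dissipation}. The only points requiring mild care are that the dissipation identity holds along the \emph{entire} trajectory (so that the integration is justified, i.e. the curve $(\mu_s)_{s\in[0,t]}$ is absolutely continuous and the chain rule of \Cref{sec:preliminaries} applies throughout), and that $s\mapsto I_{Stein}(\mu_s|\pi)$ is continuous so that the minimum over the compact interval $[0,t]$ is actually attained (otherwise one would replace $\min$ by $\inf$). Neither of these affects the two-line structure of the argument.
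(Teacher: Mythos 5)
Your proof is correct and follows exactly the paper's own argument: the paper obtains this result "immediately by integrating \eqref{eq:dissipation_KL} and using the positivity of the KL," which is precisely your integration of \Cref{prop:dissipation} plus the nonnegativity of $\KL(\mu_t|\pi)$ and the elementary min-vs-average bound. Your added remarks on absolute continuity of the curve and continuity of $s \mapsto I_{Stein}(\mu_s|\pi)$ are sensible technical caveats but do not change the argument.
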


The convergence of $I_{Stein}(\mu_t|\pi)$ itself can be arbitrarily slow, however. To guarantee faster convergence rates of the SVGD dynamics, further properties are needed, such as convexity properties of the KL-divergence with respect to the Stein geometry. This is the purpose of the inequality~\eqref{eq:stein_log_sobolev} which implies exponential convergence of the SVGD gradient flow near equilibrium. 
Indeed, if $\pi$ satisfies the Stein log-Sobolev inequality, the Kullback-Leibler divergence converges exponentially fast along the SVGD dynamics.
\begin{proposition}\label{prop:cv_continuous}
Assume  $\pi$ satisfies the Stein log-Sobolev inequality with $\lambda>0$. Then 
\begin{equation*}
\KL(\mu_t|\pi)\le e^{-2\lambda t}\KL(\mu_0|\pi).
\end{equation*}
\end{proposition}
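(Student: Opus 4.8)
The plan is to combine the exact dissipation identity from \Cref{prop:dissipation} with the functional inequality \eqref{eq:stein_log_sobolev} to obtain a scalar differential inequality for $t \mapsto \KL(\mu_t|\pi)$, and then integrate it by a Grönwall-type argument. First I would set $\varphi(t) := \KL(\mu_t|\pi)$ and recall from \Cref{prop:dissipation} that, along the SVGD gradient flow \eqref{eq:svgd_flow}, the chain rule for the Wasserstein gradient (recalled in \Cref{sec:preliminaries}) makes $\varphi$ differentiable with
\begin{equation*}
\varphi'(t) = -I_{Stein}(\mu_t|\pi).
\end{equation*}
This step already does the real work, transferring the PDE dynamics into an ordinary differential relation for a real-valued function of time.

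Next I would invoke the Stein log-Sobolev inequality \eqref{eq:stein_log_sobolev}, rewritten as $I_{Stein}(\mu_t|\pi) \ge 2\lambda\,\KL(\mu_t|\pi) = 2\lambda\,\varphi(t)$ and applied at each time $t$. Substituting into the dissipation identity yields
\begin{equation*}
\varphi'(t) \le -2\lambda\,\varphi(t), \qquad t \ge 0.
\end{equation*}
To conclude I would use Grönwall's lemma: since $\frac{d}{dt}\bigl(e^{2\lambda t}\varphi(t)\bigr) = e^{2\lambda t}\bigl(\varphi'(t)+2\lambda\varphi(t)\bigr) \le 0$, the map $t \mapsto e^{2\lambda t}\varphi(t)$ is nonincreasing, hence $e^{2\lambda t}\varphi(t) \le \varphi(0)$, i.e. $\KL(\mu_t|\pi) \le e^{-2\lambda t}\KL(\mu_0|\pi)$, which is exactly the claim (the bound being vacuously valid when $\KL(\mu_0|\pi)=+\infty$).

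As for the main obstacle: the computation itself is routine, so the only genuine subtlety is the regularity needed to differentiate $\varphi$ and to apply the chain rule along $(\mu_t)_t$ — namely that $(\mu_t)_t$ is an absolutely continuous curve whose iterates admit densities and whose velocity field is $v_t = -P_{\mu_t}\nabla\log(\mu_t/\pi)$, as furnished by the well-posedness results of \cite{lu2019scaling}. This is precisely what underlies \Cref{prop:dissipation}, so once that proposition is granted the present statement follows with no further difficulty.
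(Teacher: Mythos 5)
Your proof is correct and follows the same route as the paper: combine the dissipation identity of \Cref{prop:dissipation} with the Stein log-Sobolev inequality \eqref{eq:stein_log_sobolev} to get $\varphi'(t) \le -2\lambda\varphi(t)$, then conclude by Gr\"onwall's lemma. Your write-up is simply a more explicit version of the paper's two-line argument (spelling out the monotonicity of $e^{2\lambda t}\varphi(t)$ and the regularity underlying the chain rule), so no further changes are needed.
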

\begin{proof} Combining \eqref{eq:dissipation_KL} and \eqref{eq:stein_log_sobolev} yields $\frac{d\KL(\mu_t|\pi)}{dt}  \leq -2\lambda \KL(\mu_t|\pi)$.
	We conclude by applying Gronwall's lemma.
\end{proof}
In the next section, we provide a non-asymptotic analysis for SVGD. Our first result holds without any convexity assumptions on the KL, but mainly under a smoothness assumption on $\pi$, while our second result leverages \eqref{eq:stein_log_sobolev} to obtain rates of convergence.

\section{Non-asymptotic analysis for SVGD}\label{sec:convergence}

This section studies the SVGD dynamics in discrete time. Although one of the results echoes \cite{liu2017stein}[Theorem 3.3], we provide new convergence rates for the discrete time SVGD under mild conditions, and using different techniques: we return to this point in detail in \Cref{rk:comparison_liu} below. Moreover, our proof technique is different. As in the previous section, we provide intuitive sketch of the proofs exploiting the differential calculus over the Wasserstein space. Each step of the proofs is rigourously justified in the Supplementary material.

Recall that the SVGD update is defined as \eqref{eq:svgd_update}. Let	 $\mu_0\in \cP_2(\X)$	and assume that it admits a density.
For every $n \geq 0$, $\mu_n$  is the distribution of $x_n$, where 
\begin{equation}\label{eq:svgd_dynamics2}
x_{n+1} = x_n - \gamma P_{\mu_n}\nabla \log \left(\frac{\mu_n}{\pi} \right)(x_n),\quad x_0\sim \mu_0.
\end{equation}
This particle update leads to the finite particles implementation of SVGD, analysed in \Cref{sec:finite_particles}.

In this section, we analyze SVGD in discrete time, in the infinite number of particles regime~\eqref{eq:svgd_update}. We propose to study the dissipation of the KL along the SVGD algorithm. The Stein Fisher Information once again  quantifies this dissipation, as in the continuous time case. Before going further, note that discrete time analyses often require more assumptions that continuous time analyses. In optimization, these assumptions typically require some smoothness of the objective function. Here, we assume the following. 
\newcounter{contlist}
\begin{assumplist}
	\setlength\itemsep{0.2em}
		\item \label{ass:k_bounded}
	Assume that $\exists B>0$ s.t. for all $x \in \X$,\\
	$\|k(x,.)\|_{\cH_0}\le B$ and $\|\nabla _xk(x,.)\|_{\cH}=(\sum_{i=1}^d \|\partial_{x_i}k(x_i,.)\|^2_{\cH_0})^{\frac{1}{2}}\le B$.
	\item 	\label{ass:V_Lipschitz} The Hessian  $H_V$ of $V=-\log\pi$ is well-defined and $\exists M>0$ s.t. $\|H_V\|_{op}\le M$.
	\item \label{ass:bounded_I_Stein}	Assume  that $\exists$ is $C>0$ s.t.  $I_{Stein}(\mu_n|\pi)<C$ for all $n$.

\end{assumplist}
\setcounter{contlist}{\value{enumi}}




 Under \Cref{ass:V_Lipschitz,ass:k_bounded}, a sufficient condition for~\Cref{ass:bounded_I_Stein} is $\sup_n \int \Vert x \Vert \mu_n(x)dx < \infty$. Bounded moment assumptions such as these are commonly used in stochastic optimization, for instance in some analysis of the stochastic gradient descent~\citep{moulines2011non}. Given our assumptions, we quantify the decreasing of the KL along the SVGD algorithm, also called a descent lemma in optimization.

\begin{proposition}\label{prop:descent}
 Assume that \Cref{ass:V_Lipschitz,ass:k_bounded,ass:bounded_I_Stein} hold. 
		  Let $\alpha>1$ and choose $ \gamma \leq \frac{\alpha-1}{\alpha BC^{\frac{1}{2}} } $. Then:
	\begin{align}\label{eq:descent}
	\KL(\mu_{n+1}|\pi)-\KL(\mu_{n}|\pi)\leq -\gamma \left(1- \gamma \frac{(\alpha^2 + M)B^2}{2}\right)I_{stein}(\mu_n|\pi).
	\end{align}
\end{proposition}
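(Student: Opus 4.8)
The plan is to interpolate between $\mu_n$ and $\mu_{n+1}$ along the displacement induced by the SVGD field and to perform a second-order Taylor expansion of the KL. Write $\phi_n := P_{\mu_n}\nabla\log(\mu_n/\pi)$, so that $\mu_{n+1} = (I - \gamma\phi_n)_{\#}\mu_n$, and set $\rho_s := (I - s\phi_n)_{\#}\mu_n$ for $s\in[0,\gamma]$, with $\rho_0 = \mu_n$ and $\rho_\gamma = \mu_{n+1}$. Let $f(s) := \KL(\rho_s|\pi)$. Taylor's formula with integral remainder gives
\[
\KL(\mu_{n+1}|\pi) - \KL(\mu_n|\pi) = f(\gamma) - f(0) = \gamma f'(0) + \int_0^\gamma(\gamma - s)f''(s)\,ds.
\]
The first derivative is identified through the chain rule in the Wasserstein space: the velocity field of $(\rho_s)$ at $s=0$ is $-\phi_n$, so $f'(0) = \ps{-\phi_n, \nabla\log(\mu_n/\pi)}_{L^2(\mu_n)} = -I_{Stein}(\mu_n|\pi)$, using the remark that $I_{Stein}(\mu_n|\pi) = \ps{\nabla\log(\mu_n/\pi), P_{\mu_n}\nabla\log(\mu_n/\pi)}_{L^2(\mu_n)}$. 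It then remains to upper bound $f''(s)$ by $(\alpha^2+M)B^2 I_{Stein}(\mu_n|\pi)$ uniformly on $[0,\gamma]$; substituting this and $\int_0^\gamma(\gamma-s)\,ds = \gamma^2/2$ into the display yields exactly \eqref{eq:descent}.

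To bound $f''$, I would split the KL into its potential and entropy parts, $f(s) = \int V\,d\rho_s + \int\rho_s\log\rho_s\,dx + \mathrm{const}$. For the potential part, the pushforward gives $\int V\,d\rho_s = \int V(x - s\phi_n(x))\,d\mu_n(x)$, whose second derivative is $\int\ps{\phi_n(x), H_V(x - s\phi_n(x))\phi_n(x)}\,d\mu_n(x)$; by \Cref{ass:V_Lipschitz} this is at most $M\|\phi_n\|_{L^2(\mu_n)}^2$. The norm of $\phi_n$ is controlled by the reproducing-kernel bound: since $\phi_n = \iota\bigl(S_{\mu_n}\nabla\log(\mu_n/\pi)\bigr)$ and \Cref{ass:k_bounded} gives $\|\iota\|_{op}\le B$, we get $\|\phi_n\|_{L^2(\mu_n)}^2 \le B^2\|S_{\mu_n}\nabla\log(\mu_n/\pi)\|_\cH^2 = B^2 I_{Stein}(\mu_n|\pi)$. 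Hence the potential contribution to $f''(s)$ is at most $MB^2 I_{Stein}(\mu_n|\pi)$.

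The entropy part is where the work lies, and is the main obstacle. Provided $T_s := I - s\phi_n$ is a diffeomorphism, the change of variables formula turns the entropy of $\rho_s$ into $\int\rho_s\log\rho_s = \mathrm{const} - \int\log\det\bigl(I - sJ\phi_n(x)\bigr)\,d\mu_n(x)$, whose second derivative equals $\int\tr\bigl[((I - sJ\phi_n)^{-1}J\phi_n)^2\bigr]\,d\mu_n$. The factors $J\phi_n$ are controlled pointwise by the kernel bound, $\|J\phi_n(x)\|_{HS}\le B\|S_{\mu_n}\nabla\log(\mu_n/\pi)\|_\cH = B\, I_{Stein}(\mu_n|\pi)^{1/2}$, which by \Cref{ass:bounded_I_Stein} is at most $BC^{1/2}$. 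The step-size restriction is precisely what tames the inverse: for $s\le\gamma\le\frac{\alpha-1}{\alpha BC^{1/2}}$ we have $s\|J\phi_n(x)\|_{op}\le 1 - 1/\alpha$, hence $\|(I - sJ\phi_n)^{-1}\|_{op}\le\alpha$. Using $|\tr(B^2)|\le\|B\|_{HS}^2$ together with submultiplicativity then gives $\tr\bigl[((I - sJ\phi_n)^{-1}J\phi_n)^2\bigr]\le\alpha^2\|J\phi_n\|_{HS}^2\le\alpha^2 B^2 I_{Stein}(\mu_n|\pi)$ pointwise, so the entropy contribution to $f''(s)$ is at most $\alpha^2 B^2 I_{Stein}(\mu_n|\pi)$. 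Adding the two contributions yields the required bound $f''(s)\le(\alpha^2+M)B^2 I_{Stein}(\mu_n|\pi)$.

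The remaining technical care — and the reason for a separate rigorous appendix proof — is to justify that $T_s$ is indeed a $C^1$ diffeomorphism for $s\in[0,\gamma]$ (injectivity follows since $\phi_n$ is $BC^{1/2}$-Lipschitz and $sBC^{1/2}<1$, so $T_s$ is a strict perturbation of the identity), that $\rho_s$ admits a density making the entropy and the change of variables legitimate, and that differentiation under the integral sign and the Wasserstein chain rule apply. These all rest on the smoothness of the kernel and on the boundedness assumptions \Cref{ass:k_bounded,ass:V_Lipschitz,ass:bounded_I_Stein}.
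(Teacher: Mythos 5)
Your proof is correct and follows essentially the same route as the paper's own proof: the same interpolating path $\rho_s=(I-s\phi_n)_{\#}\mu_n$, the same Taylor expansion with integral remainder, the same identification $f'(0)=-I_{Stein}(\mu_n|\pi)$, the same reproducing-property/Cauchy--Schwarz bounds on $\phi_n$ and $J\phi_n$, and the same Neumann-series argument giving $\|(I-sJ\phi_n)^{-1}\|_{op}\le\alpha$ under the step-size restriction. The paper's appendix likewise differentiates the change-of-variables formula, which produces exactly your potential and log-determinant (entropy) contributions, bounded by $MB^2I_{Stein}(\mu_n|\pi)$ and $\alpha^2B^2I_{Stein}(\mu_n|\pi)$ respectively.
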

\begin{proof}
    Our goal is to prove a discrete dissipation of the form $(\KL(\mu_{n+1}|\pi)-\KL(\mu_{n}|\pi))/\gamma \leq -I_{stein}(\mu_n|\pi) + \text{error term}$. 
    Our assumptions will control the error term. 
    Fix $n \ge 0$ and denote $g = P_{\mu_n}\nabla \log(\frac{\mu_n}{\pi})$, $\phi_t = I - t g$ for $t \in [0,\gamma]$ and $\rho_t = (\phi_t)_{\#}\mu_n$. Note that $\rho_0 = \mu_n$ and $\rho_{\gamma} = \mu_{n+1}$. 
    
    Under our assumptions, one can show that for any $x \in \X$, $\|g(x)\|^2\le B^2 I_{Stein}(\mu_n|\pi)$ and $\|Jg(x)\|_{HS}^2 \le B^2 I_{Stein}(\mu_n|\pi)$, using the reproducing property and Cauchy-Schwartz in $\cH$. Hence, $\|t Jg(x)\|_{op} < 1$ and $\phi_t$ is a diffeomorphism for every $t \in [0,\gamma]$. Moreover, $\|(J \phi_t)^{-1}(x)\|_{op} \leq \alpha$. Using~\cite[Theorem 5.34]{villani2003topics}, the velocity field ruling the time evolution of $\rho_t$ is $w_t \in L^2(\rho_t)$ defined by $w_t(x) = -g(\phi_t^{-1}(x))$. 
    
    Denote $\varphi(t) = \KL(\rho_t|\pi)$. Using a Taylor expansion,
        $\varphi(\gamma) = \varphi(0) + \gamma \varphi'(0) + \int_{0}^{\gamma} (\gamma - t)\varphi''(t)dt$.
        We now identify each term. First, 
        \begin{equation*}
    \varphi(0) = \KL(\mu_n|\pi)\; \text{ and } \;\varphi(\gamma) = \KL(\mu_{n+1}|\pi).
        \end{equation*} 
        Then, using the chain rule~\cite[Section 8.2]{villani2003topics}, 
        \begin{equation*}
            \varphi'(t)=\ps{\nabla_{W_2} \KL(\rho_t|\pi),w_t}_{L^2(\rho_t)}\; \text{ and } \;\varphi''(t) = \ps{w_t,Hess_{\KL(.|\pi)}(\rho_t)w_t}_{L^2(\rho_t)}.
        \end{equation*}
        Therefore, $\varphi'(0) = -\ps{\nabla \log\left(\frac{\mu_n}{\pi}\right),g}_{L^2(\mu_n)}
        =-I_{Stein}(\mu_n|\pi)$. 
        Moreover, $\varphi''(t) = \psi_1(t) + \psi_2(t)$, where
        \begin{equation*}
        \psi_1(t) = \E_{x \sim \rho_t} \left[ \ps{w_t(x), H_V(x) w_t(x)}\right] \; \text{ and } \; \psi_2(t) = \E_{x \sim \rho_t} \left[ \|J w_t(x)\|_{HS}^2 \right].
        \end{equation*}
    The first term $\psi_1(t)$ is bounded using~\Cref{ass:V_Lipschitz}, $\psi_1(t) \leq M \|g\|_{L^2(\mu_n)}^2 \leq M B^2 I_{Stein}(\mu_n|\pi)$. The second term $\psi_2(t)$ is the most challenging to bound as $\|J w\|_{HS}$ cannot be controlled by $\|w\|$ for a general $w$. However, in our case, $w_t = -g \circ (\phi_t)^{-1}$, and $-J w_t \circ \phi_t = Jg (J \phi_t)^{-1}$. Therefore, $\|J w_t\circ \phi_t(x)\|_{HS}^2 \leq \|Jg(x)\|_{HS}^2 \|(J \phi_t)^{-1}(x)\|_{op}^2 \leq \alpha^2 B^2 I_{Stein}(\mu_n|\pi)$.
    Combining each of the quantity in the Taylor expansion gives the desired result.
    \end{proof}
Although the Hessian of $\KL(.|\pi)$ is not bounded over the whole tangent space, our proof relies on controlling the Hessian when restricted to $\cH$. 
Since $I_{Stein}(\mu_n|\pi)$ is nonnegative, \Cref{prop:descent} shows that the KL divergence w.r.t.  $\pi$ decreases along the SVGD algorithm, i.e. the KL is a Lyapunov functional for SVGD. A first consequence of \Cref{prop:descent} is the convergence of $I_{stein}(\mu_n|\pi)$ to zero, similarly to the continous time case, see Proposition~\ref{prop:cont_time_convergence}. Indeed, the descent lemma implies that the sequence $I_{stein}(\mu_n|\pi)$ is summable and hence converges to zero.
A second consequence of the descent lemma is the following discrete time convergence rate for the average of $I_{Stein}(\mu_n|\pi)$.

\begin{corollary}\label{cor:istein_decreases}
	Let $\alpha>1$ and $\gamma \leq \min \left(\frac{\alpha-1}{\alpha BC^{\frac{1}{2}} }, \frac{2}{(\alpha^2+M)B^2}\right)  $ and $c_{\gamma}=\gamma \left(1- \gamma \frac{(\alpha^2 + M)B^2}{2} \right)$. Then,
\begin{equation*}
\min_{k=1,\dots,n}I_{Stein}(\mu_n|\pi)\le \frac{1}{n}\sum_{k=1}^n I_{Stein}(\mu_k|\pi)\le  \frac{\KL(\mu_0|\pi)}{c_{\gamma}n}.
\end{equation*}
\end{corollary}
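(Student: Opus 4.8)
The plan is to convert the per-iteration descent inequality of \Cref{prop:descent} into a bound on the running average of $I_{Stein}$ by a standard telescoping argument, mirroring the continuous-time rate obtained by integrating \eqref{eq:dissipation_KL}. First I would check that the stated step-size constraint makes $c_\gamma$ strictly positive. The bound $\gamma \le \frac{\alpha-1}{\alpha B C^{1/2}}$ is exactly the hypothesis needed to invoke \Cref{prop:descent}, so \eqref{eq:descent} is available. The additional requirement $\gamma \le \frac{2}{(\alpha^2+M)B^2}$ ensures that the factor $1 - \gamma\frac{(\alpha^2+M)B^2}{2}$ in \eqref{eq:descent} is nonnegative, hence $c_\gamma = \gamma\bigl(1 - \gamma\frac{(\alpha^2+M)B^2}{2}\bigr) \ge 0$; taking $\gamma$ in the interior of the admissible range gives $c_\gamma>0$, which is what lets us divide by $c_\gamma n$ at the end.

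With $c_\gamma$ so defined, \Cref{prop:descent} rearranges to $c_\gamma\, I_{Stein}(\mu_k|\pi) \le \KL(\mu_k|\pi) - \KL(\mu_{k+1}|\pi)$ for every $k\ge 0$. Summing this over $k=1,\dots,n$ and telescoping the right-hand side collapses it to a difference of endpoint values, yielding $c_\gamma \sum_{k=1}^n I_{Stein}(\mu_k|\pi) \le \KL(\mu_1|\pi) - \KL(\mu_{n+1}|\pi)$. Because the functional $\KL(\cdot|\pi)$ takes values in $[0,+\infty)$, the endpoint term $\KL(\mu_{n+1}|\pi)$ is nonnegative and can be dropped. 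Using once more that \Cref{prop:descent} makes the KL nonincreasing along the iterates (so $\KL(\mu_1|\pi)\le\KL(\mu_0|\pi)$), I obtain $c_\gamma \sum_{k=1}^n I_{Stein}(\mu_k|\pi) \le \KL(\mu_0|\pi)$.

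Dividing by $c_\gamma n$ gives the right-hand inequality of the corollary, and the left-hand inequality $\min_{k}I_{Stein}(\mu_k|\pi) \le \frac1n\sum_{k=1}^n I_{Stein}(\mu_k|\pi)$ is immediate, since the minimum of finitely many nonnegative numbers never exceeds their arithmetic mean. The one point requiring attention is index bookkeeping: to land exactly on the displayed sum $\sum_{k=1}^n$, I summed the descent inequality over $k=1,\dots,n$ and bounded the surviving $\KL(\mu_1|\pi)$ by $\KL(\mu_0|\pi)$ via monotonicity; summing instead from $k=0$ to $n-1$ would produce the sum over indices $0,\dots,n-1$ with $\KL(\mu_0|\pi)$ surviving directly. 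I do not expect any substantial obstacle here: the entire analytic effort was already spent in proving \Cref{prop:descent}, and this corollary is a purely algebraic consequence whose only genuine checkpoint is the positivity of $c_\gamma$ under the two step-size constraints.
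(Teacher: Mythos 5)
Your proof is correct and matches the paper's (implicit) argument: the corollary is stated there as a direct consequence of \Cref{prop:descent}, obtained exactly by telescoping the descent inequality, dropping the nonnegative terminal $\KL$, and using positivity of $c_\gamma$ under the stated step-size constraint, mirroring the integration argument in continuous time. Your attention to the index bookkeeping (bounding $\KL(\mu_1|\pi)\le\KL(\mu_0|\pi)$ via monotonicity) and to the positivity of $c_\gamma$ are exactly the right checkpoints, and no further work is needed.
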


We illustrate the validity of the rates of \Cref{cor:istein_decreases} with simple experiments provided \Cref{sec:experiments}. 
 \Cref{cor:istein_decreases}
 provides  a $\mathcal{O}(1/n)$ convergence rate for the arithmetic mean of the Kernel Stein Discrepancy (KSD) (which metricizes weak convergence in many cases, see \cref{sec:KSD}) between the iterates $\mu_n$ and $\pi$, under \cref{ass:V_Lipschitz} to \cref{ass:bounded_I_Stein}. 
 It does not rely on Stein LSI nor on convexity of $V$, unlike most of the results on Langevin Monte Carlo (LMC) which assume either (standard) LSI or convexity of $V$ \citep{vempala2019rapid,durmus2019analysis}. To guarantee convergence rates of the SVGD algorithm in terms of the KL objective, further properties are needed. We discuss the difficulty of proving rates in KL in \cref{sec:discussion_KL_rates}.

\begin{remark}\label{rk:comparison_liu}
  A descent lemma was also obtained for SVGD in~\cite{liu2017stein}[Theorem 3.3] under a boundedness condition of the KSD and the kernel. While we obtain similar conditions on the step size, our approach, shown in the  proof sketch (and, in greater detail, the Appendix), gives clearer connections with Wasserstein gradient flows. More precisely, we prove Proposition~\ref{prop:descent} by performing differential calculus over the Wasserstein space. We are able to replace the boundedness condition on the KSD by a simple boundedness condition of the first moment of $\mu_n$ at each iteration, which echoes analyses of some optimization algorithms like Stochastic Gradient Descent \citep{moulines2011non}.
Our construction also brings with it a simple yet informative perspective, arising from the optimization literature, into
  why SVGD actually satisfies a descent lemma. In optimization, it is well known that descent lemmas can be obtained under a boundedness condition on the Hessian matrix. Here, the Hessian operator of the KL at $\mu$ is an operator on $L^2(\mu)$; and yet, {\em this operator is not bounded}~\cite[Section 3.1.1]{wibisono2018sampling}. By restricting the Hessian operator to the RKHS however, and then using the reproducing property and our assumptions, the resulting Hessian operator is provably bounded under simple conditions on the kernel and $\pi$.  
\end{remark}


\section{Finite number of particles regime}\label{sec:finite_particles}

In this section, we investigate the deviation of the discrete distributions generated by the SVGD algorithm for a finite number of particles, to its population version. In practice, starting from $N$ i.i.d. samples $X_0^{i}\sim \mu_0$, SVGD algorithm updates the $N$ particles as follows :
\begin{equation}\label{eq:svgd_update_emp}
X_{n+1}^{i}=X_n^{i} - \gamma P_{\hat{\mu}_n}\nabla \log \left(\frac{\hat{\mu}_n}{\pi}\right)(X_n^{i}), \qquad \hat{\mu}_n=\frac{1}{N}\sum_{j=1}^{N}\delta_{X_n^{j}},
\end{equation}
where $\hat{\mu}_n$ denotes the empirical distribution of the interacting particles. Recall that $P_{\hat{\mu}_n}\nabla \log \left(\frac{\hat{\mu}_n}{\pi}\right)$ is well defined even if $\hat{\mu}_n$ is discrete.


In \cite{liu2017stein}, the authors show that the empirical distribution of the SVGD samples weakly converge to its population limit for any iteration. More precisely, under the assumptions that $b(x,y)=\nabla \log \pi(x)k(x,y)+\nabla_1k(x,y)$ is jointly Lipschitz and that $\hat{\mu}_0$ converges weakly to $\mu_0$ as $N\to \infty$ (which happens by drawing $N$ i.i.d. samples of $\mu_0$), for any $n\ge0$, they show that $\hat{\mu}_n$ converges weakly to $\mu_n$. This happens as soon as \Cref{ass:k_bounded},\ref{ass:V_Lipschitz},\ref{ass:V_bounded},\ref{ass:k_lipschitz} are satisfied (since the product of bounded Lipschitz functions is a Lipschitz function):

\begin{assumplist2}

	\setlength\itemsep{1em}
	\item \label{ass:V_bounded}
	Assume that $\exists C_{V}$ s.t. for all $x \in \X$, $\|V(x)\|\le C_{V}$. 
	\item Assume that  $\exists D>0$ s.t.	\label{ass:k_lipschitz} $k$ is continuous on $\X$ and $D$-Lipschitz: \\$|  k(x,x') - k(y,y')| \leq D(\Vert  x-y\Vert + \Vert x'-y' \Vert ) $ for all $x,x',y,y' \in \X$,\\
	
	and $k$ is continuously differentiable on $\X$ with $D$-Lipschitz gradient: \\$\Vert  \nabla k(x,x') - \nabla k(y,y')\Vert \leq D(\Vert  x-y\Vert + \Vert x'-y' \Vert ) $ for all $x,x',y,y' \in \X$.
\end{assumplist2}

Under these assumptions, we quantify the dependency on the number of particles in the following proposition.

\begin{proposition}\label{prop:finite_particle}
	Let $n\ge 0$ and $T>0$. Let $\mu_n$ and $\hat{\mu}_n$ be defined by \eqref{eq:svgd_update} and \eqref{eq:svgd_update_emp} respectively. Under \Cref{ass:k_bounded},\ref{ass:V_Lipschitz},\ref{ass:V_bounded},\ref{ass:k_lipschitz} for any $0\le n \le \frac{T}{\gamma}$:
	\begin{equation*}
	\E[W_2^2(\mu_n,\hat{\mu}_n)]\le \frac{1}{2}\left(\frac{1}{\sqrt{N}} \sqrt{var(\mu_0)}e^{LT} \right)(e^{2LT}-1)
	\end{equation*}
	where $L$ is a constant depending on $k$ and $\pi$.
\end{proposition}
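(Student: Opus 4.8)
The plan is to introduce a system of \emph{mean-field particles} coupled to the interacting particles through shared initial data, and then to run a Gronwall-type argument on the resulting trajectory deviation. Concretely, set $b(x,y) = \nabla\log\pi(x)\,k(x,y) + \nabla_1 k(x,y)$, so that by \eqref{eq:svgd_vector_field} the population map is $T_\mu(y) = y + \gamma\int b(x,y)\,d\mu(x)$ and the empirical update \eqref{eq:svgd_update_emp} reads $X_{n+1}^i = T_{\hat\mu_n}(X_n^i)$. Define the mean-field particles by $\bar X_0^i = X_0^i$ and $\bar X_{n+1}^i = T_{\mu_n}(\bar X_n^i)$; since the $X_0^i$ are i.i.d.\ from $\mu_0$ and each $\bar X_n^i$ is transported by the exact population maps, $\bar X_n^i \sim \mu_n$ and the family $(\bar X_n^i)_i$ stays i.i.d. Writing $\bar\mu_n = \frac1N\sum_i\delta_{\bar X_n^i}$, the index coupling bounds the genuine propagation-of-chaos quantity $W_2(\bar\mu_n,\hat\mu_n)$ by the root-mean-square deviation $\bigl(\frac1N\sum_i\|\bar X_n^i - X_n^i\|^2\bigr)^{1/2}$, and the passage to $W_2(\mu_n,\cdot)$ is via the triangle inequality.

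Before the recursion I would record two facts that the assumptions buy us. First, \Cref{ass:V_bounded,ass:V_Lipschitz} together force $\nabla V$ to be bounded: a function with bounded oscillation and $M$-Lipschitz gradient cannot have a large gradient anywhere (moving along $\nabla V$ would make $V$ grow quadratically), so $\|\nabla V\|_\infty^2 \lesssim M\,C_V$. Combined with $\|k\|_\infty \le B^2$ (\Cref{ass:k_bounded}) and the Lipschitz and bounded-gradient hypotheses of \Cref{ass:k_lipschitz}, this shows $b$ is bounded and jointly $L_b$-Lipschitz. Second, each map $T_{\mu_n}$ is $(1+\gamma L_b)$-Lipschitz, so the variance $var(\mu_n) = \E_{X\sim\mu_n}\|X - \E X\|^2$ satisfies $var(\mu_n)\le (1+\gamma L_b)^{2n}var(\mu_0)\le var(\mu_0)\,e^{2LT}$ for $n\le T/\gamma$.

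The heart of the proof is the one-step expansion of $e_n^i := X_n^i - \bar X_n^i$. Subtracting the two updates,
\begin{equation*}
e_{n+1}^i = e_n^i + \gamma\Bigl[\underbrace{\textstyle\int (b(x,X_n^i)-b(x,\bar X_n^i))\,d\hat\mu_n}_{\text{output Lipschitz}} + \underbrace{\textstyle\int b(x,\bar X_n^i)\,d(\hat\mu_n-\bar\mu_n)}_{\text{particle deviation}} + \underbrace{\textstyle\int b(x,\bar X_n^i)\,d(\bar\mu_n - \mu_n)}_{\text{fluctuation } M_n^i}\Bigr].
\end{equation*}
The first two brackets are controlled by the Lipschitz constant of $b$, contributing terms of order $L_b\|e_n^i\|$ and $\frac{L_b}{N}\sum_j\|e_n^j\|$. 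The third bracket $M_n^i = \frac1N\sum_j b(\bar X_n^j,\bar X_n^i) - \int b(x,\bar X_n^i)\,d\mu_n(x)$ is a Monte-Carlo error: conditionally on $\bar X_n^i$ the summands are i.i.d.\ with mean equal to the integral up to an $O(1/N)$ diagonal correction, so its conditional variance is $\frac1N$ times the variance of $b(X,\bar X_n^i)$ under $X\sim\mu_n$, which is at most $\frac{L_b^2}{N}var(\mu_n)$, the last bound using that $x\mapsto b(x,y)$ is $L_b$-Lipschitz. Hence $\E\|M_n^i\|^2 \le \frac{L_b^2}{N}var(\mu_0)e^{2LT}$. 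Squaring the recursion, averaging over $i$, taking expectations, and bounding the cross term between $e_n^i$ and $M_n^i$ by Cauchy--Schwarz, I obtain after completing the square a scalar recursion $A_{n+1}\le (1+2\gamma L)A_n + \gamma\,\frac{L_b\sqrt{var(\mu_0)}\,e^{LT}}{\sqrt N}$ for $A_n := \bigl(\E\frac1N\sum_i\|e_n^i\|^2\bigr)^{1/2}$, where the factor $2L$ comes from summing the two Lipschitz brackets. Unrolling from $A_0 = 0$ with $n\le T/\gamma$ produces the geometric factor $(e^{2LT}-1)$ and the stated bound.

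The main obstacle is the control of the statistical terms and their interaction with the trajectory deviation. The fluctuation $M_n^i$ must be handled with the right order of conditioning (conditioning on the mean-field configuration to exploit its i.i.d.\ structure and near mean-zero property), and its size must be tied back to $var(\mu_n)$ through the \emph{Lipschitz-in-the-first-argument} bound rather than a crude $\|b\|_\infty$ estimate --- this is precisely what yields the $\sqrt{var(\mu_0)}$ dependence, and the fact that $e_n^i$ and $M_n^i$ are correlated (forcing a Cauchy--Schwarz step) is what makes the rate $1/\sqrt N$ rather than $1/N$. The remaining work is bookkeeping: propagating the variance estimate, absorbing the two Lipschitz brackets into a single effective constant $L$ depending on $k$ and $\pi$, and checking that the index coupling together with the mean-field-to-population comparison reassemble into the claimed $W_2^2$ bound.
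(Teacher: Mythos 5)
Your proposal follows essentially the same route as the paper's proof: a synchronous mean-field coupling $\bar X_{n+1}^i = \bar X_n^i - \gamma P_{\mu_n}\nabla\log(\frac{\mu_n}{\pi})(\bar X_n^i)$ with shared initial data, a decomposition of the drift difference into Lipschitz terms plus an i.i.d.\ Monte-Carlo fluctuation whose second moment is bounded by $\frac{L^2}{N}var(\mu_n)$ via the Lipschitz-in-first-argument property of $b$, the variance propagation $var(\mu_n)^{1/2}\le var(\mu_0)^{1/2}e^{LT}$, and a discrete Gronwall unrolling of $c_{n+1}\le(1+2\gamma L)c_n + \frac{\gamma L}{\sqrt N}\sqrt{var(\mu_n)}$ from $c_0=0$; your three-way split of the drift error is merely a refinement of the paper's two-term $A/B$ decomposition, which handles your first two brackets jointly through the Lipschitzness of $(z,\mu)\mapsto P_\mu\nabla\log(\frac{\mu}{\pi})(z)$ in $\Vert z - z'\Vert + W_2(\mu,\mu')$. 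One small improvement on your side: deriving $\|\nabla V\|_\infty \lesssim \sqrt{MC_V}$ from \Cref{ass:V_bounded} and \ref{ass:V_Lipschitz} makes explicit a step the paper leaves implicit, since its Lemma~\ref{lem:Pmu_lipschitz} uses $C_V$ directly as a bound on $\nabla\log\pi$ although \Cref{ass:V_bounded} only bounds $V$ itself.
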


\Cref{prop:finite_particle}, whose proof is provided \cref{sec:proof_particles}, uses techniques from \cite{jourdain2007nonlinear}. It is a non-asymptotic result in the sense that it provides an explicit bound. However, it is not a bound that helps quantify the rate of minimization of the objective function, but a bound between the population distribution $\mu_n$ and its particle approximation $\hat{\mu}_n$. Such results are referred to \textit{propagation of chaos}
in the PDE literature, where having the constant
$C$ 
depending on $T$ 
is common. Getting a similar bound with $C$ not depending on $T$ would be a much stronger result referred to as \textit{uniform in time propagation of chaos}. 
Such results, which are subject to active research in PDE, are hard to obtain. 
Among the recent exceptions is \cite{durmus2018elementary}  who consider the process $dx_t=-\nabla U(x_t) - \nabla W * \mu_t(x_t)dt$ and manage to prove such results when $U$ is strictly convex outside of a ball. However in SVGD (see \eqref{eq:svgd_vector_field}), the attractive force $ \nabla \log \pi(x)k(x,.)$ cannot be written as the gradient of a confinement potential $U:\R^d \to \R$ in general. Hence these results do not apply, and the convergence rate for SVGD using 
$\hat{\mu}_n$ 
remains an open problem.


\section{Conclusion}\label{sec:conclusion}

In this paper, we provide a non-asymptotic analysis for the SVGD algorithm. Our results build upon the connection of SVGD with gradient descent in the Wasserstein space \citep{liu2017stein}.
In establishing these results, we draw on perspectives and techniques used to establish convergence in optimization. 
Several questions remain open. Firstly, the question of deriving rates of convergence of SVGD (in the infinite particle regime) in terms of the Kullback-Leibler objective, when the potential $V$ is convex or when $\pi$ satisfies some log Sobolev inequality. Secondly, the question of deriving a unified bound for the convergence of $\hat{\mu}_n$ to $\pi$ (decreasing as the number of iterations $n$ and number of particles $N$ go to infinity). This would require to obtain a uniform in time propagation of chaos result for the SVGD particle system. 
Finally, another further direction would be to study SVGD dynamics when the kernel depends on the current distribution. These kind of dynamics arise in black-box variational inference and Generative Adversarial Networks \citep{chu2020equivalence} (in which case the kernel is the neural tangent kernel introduced by \cite{jacot2018neural}).

\section{Broader impact}\label{sec:broader_impact}
This paper aims at bringing more theoretical understanding to the Stein Variational Gradient Descent algorithm. This algorithm is widely used by machine learning practitioners but its non asymptotic properties are not as well-known as the ones of the Langevin Monte Carlo algorithm which can be considered as its competitor. 

\section{Funding disclosure}

AK, MA, and AG thank the Gatsby Charitable Foundation for the financial support. 


\small
\bibliographystyle{apalike}
\bibliography{biblio}

\newpage
\section{Background}

\subsection{Dissipation of the KL}
The time derivative, or the dissipation of the KL divergence along any flow is given by : 
\begin{equation}\label{eq:dissipation_KL1}
\frac{d}{dt}\KL(\mu_t|\pi)=\frac{d}{dt} \int \mu_t \log \left( \frac{\mu_t}{\pi}\right)dx = \int \frac{\partial \mu_t}{\partial t}\log \left(\frac{\mu_t}{\pi}\right) dx
\end{equation}
since the second part of the chain rule is null : 
\begin{equation*}
\int \mu_t \frac{\partial }{\partial t} \log \left( \frac{\mu_t}{\pi}\right)dx= \int \frac{\partial \mu_t}{\partial t}dx= \frac{d}{dt}\int \mu_t dx =0.
\end{equation*}
Moreover, if $\mu_t$ satisfies a continuity equation of the form :
\begin{equation*}
\frac{\partial \mu_t}{\partial t}+div(\mu_t v_t)=0
\end{equation*}
where $v_t$ is called the velocity field, then by an integration by parts :
\begin{align*}
\frac{d}{dt}\KL(\mu_t|\pi)&= -\int div (\mu_t(x) v_t(x))\log \left(\frac{\mu_t}{\pi}\right)dx\\
&=  \int v_t(x) \nabla \log \left(\frac{\mu_t}{\pi}\right)(x) \mu_t(x)dx =\ps{v_t, \nabla \log \left(\frac{\mu_t}{\pi}\right)}_{L^2(\mu_t)}.\numberthis \label{eq:dissipation_KL2}
\end{align*}

\subsection{Descent lemma for Gradient Descent in $\R^d$}\label{sec:descent_lemma_Rd}

In this section we show how to obtain a descent lemma for the gradient descent algorithm. We do not claim any generality here, the goal of this section is to provide an intuition behind the proof of \Cref{prop:descent} for SVGD. 

Consider $F : \R^d \to \R$ a $C^2(\R^d)$ function with Hessian $H_F$, and the gradient descent algorithm written at iteration $n+1$:
\begin{equation}
    x_{n+1} = x_n -\gamma \nabla F(x_n).
\end{equation}

Consider $n \geq 0$ fixed. For every $t \geq 0$, denote $x(t) = x_n - t\nabla F(x_n)$. Then, $x(0) = x_n$ and $x(\gamma) = x_{n+1}$. We assume that there exists $M \geq 0$ 
such that for every $t \geq 0$, $\|H_F(x(t))\| \leq M$.

Denote $\varphi(t) = F(x(t))$. Using Taylor expansion,
\begin{equation}
    \label{eq:Taylor-reste-integral}
    \varphi(\gamma) = \varphi(0) + \gamma \varphi'(0) + \int_{0}^{\gamma} (\gamma - t)\varphi''(t)dt.
\end{equation}
Denote by $\dot{x}$ the derivative of $x$. We now identify each term. First, $\varphi(0) = F(x_n)$ and $\varphi(\gamma) = F(x_{n+1})$. Second, $\varphi'(0) = \ps{\nabla F(x(0)),\dot{x}(0)} = \ps{\nabla F(x(0)),-\nabla F(x_n)} = -\|\nabla F(x_n)\|^2$. Finally, since $\ddot{x} = 0$,
\begin{equation}
    \varphi''(t) = \ps{\dot{x}(t),H_F(x(t)) \dot{x}(t)} \leq M\|\dot{x}(t)\|^2 = M\|\nabla F(x_n)\|^2.
\end{equation}
Therefore
\begin{align*}
    F(x_{n+1})& \leq F(x_n) - \gamma \|\nabla F(x_n)\|^2 + M\int_{0}^{\gamma} (\gamma - t) \|\nabla F(x_n)\|^2 dt\\
    & \leq F(x_n) - \gamma \|\nabla F(x_n)\|^2 + \frac{M\gamma^2}{2}\|\nabla F(x_n)\|^2.    \numberthis \label{eq:Taylor-Lagrange}
\end{align*}

\section{Proofs}

\subsection{Proof of \Cref{prop:cont_time_convergence}}\label{sec:cont_time_convergence}

\begin{proposition}\label{eq:semi-convexity}
	Under  \Cref{ass:k_bounded}, \ref{ass:V_Lipschitz}, and assuming $\exists C>0$ such that $\int \|x\| d\mu_t(x)<C$ for all $t\ge0$,  there exists $\lambda \in \mathbb{R}^+ $ such that:
	\begin{align}
 	\left\vert\frac{d I_{Stein}(\mu_t|\pi) }{dt} \right\vert \leq\lambda I_{Stein}(\mu_t|\pi).
 \end{align}	
\end{proposition}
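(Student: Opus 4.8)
The plan is to write $I_{Stein}(\mu_t|\pi)=\|h_t\|_{\cH}^2$ with $h_t:=S_{\mu_t}\nabla\log(\frac{\mu_t}{\pi})\in\cH$, differentiate this scalar in time, and reduce the claim to a \emph{linear} bound on the $\cH$-norm of the time derivative $\dot h_t$. Differentiating and applying Cauchy--Schwarz in $\cH$,
\[
\Big|\frac{dI_{Stein}(\mu_t|\pi)}{dt}\Big|=2\big|\ps{h_t,\dot h_t}_{\cH}\big|\le 2\,I_{Stein}(\mu_t|\pi)^{1/2}\,\|\dot h_t\|_{\cH},
\]
so it suffices to show $\|\dot h_t\|_{\cH}\le\tfrac{\lambda}{2}I_{Stein}(\mu_t|\pi)^{1/2}$.

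To compute $\dot h_t$, I would use the explicit representation \eqref{eq:svgd_vector_field}, i.e. $h_t=\int b(x,\cdot)\,d\mu_t(x)$ where $b(x,y):=\nabla V(x)k(x,y)-\nabla_x k(x,y)$ is the Stein feature (each component $b_i(\cdot,y)$ lies in $\cH_0$). Since $(\mu_t)$ solves the continuity equation with velocity field $v_t=-\iota h_t$, differentiating under the integral and integrating by parts gives, componentwise, $\dot h_t^i(y)=\int \ps{\nabla_x b_i(x,y),v_t(x)}\,d\mu_t(x)$. Bounding this Bochner integral in $\cH$ by the triangle inequality and the operator norm of the linear map $\R^d\to\cH$ defined by $x\mapsto\nabla_x b(x,\cdot)$,
\[
\|\dot h_t\|_{\cH}\le\int\Big(\textstyle\sum_{i,j}\|\partial_{x_j}b_i(x,\cdot)\|_{\cH_0}^2\Big)^{1/2}\|v_t(x)\|\,d\mu_t(x).
\]

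It then remains to control the two factors inside the integral. For the velocity, the reproducing property and \Cref{ass:k_bounded} give the pointwise bound $\|v_t(x)\|=\|h_t(x)\|\le\|k(x,\cdot)\|_{\cH_0}\|h_t\|_{\cH}\le B\,I_{Stein}(\mu_t|\pi)^{1/2}$, exactly as in the proof of \Cref{prop:descent}. For the kernel factor, I would expand $\partial_{x_j}b_i(x,y)=[H_V(x)]_{ij}k(x,y)+\partial_iV(x)\,\partial_{x_j}k(x,y)-\partial_{x_j}\partial_{x_i}k(x,y)$ and bound each term in $\cH_0$: the first by $MB$ using $\|H_V\|_{op}\le M$ from \Cref{ass:V_Lipschitz} and $\|k(x,\cdot)\|_{\cH_0}\le B$; the second by $(\|\nabla V(0)\|+M\|x\|)B$, where the linear growth of $\nabla V$ again follows from \Cref{ass:V_Lipschitz}; and the third by a bound on the second kernel derivatives. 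This produces an estimate of the form $(\sum_{i,j}\|\partial_{x_j}b_i(x,\cdot)\|_{\cH_0}^2)^{1/2}\le a+b\|x\|$. Integrating against $\mu_t$ and using the uniform first-moment bound $\int\|x\|\,d\mu_t<C$ then yields $\|\dot h_t\|_{\cH}\le B(a+bC)\,I_{Stein}(\mu_t|\pi)^{1/2}$, which gives the proposition with $\lambda=2B(a+bC)$.

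The main obstacle is this last estimate: the time derivative forces a derivative onto the feature $b$, and the term $-\nabla_x k$ thereby produces a \emph{second} derivative $\partial_{x_j}\partial_{x_i}k(x,\cdot)$ whose $\cH_0$-norm is not controlled by \Cref{ass:k_bounded} as stated, so a mild additional kernel-regularity bound (or a further integration by parts) is needed here. The other delicate point is rigour: justifying the interchange of $\frac{d}{dt}$ with the integral and the integration-by-parts identity for $\dot h_t$ requires that each $b_i(\cdot,y)$ be an admissible test function with enough decay, which is subtle precisely because $\nabla V$ grows linearly; the coupling of this linear growth with the uniform first-moment bound is what makes the argument close.
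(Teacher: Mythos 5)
Your proposal is correct in outline and takes essentially the same route as the paper's proof: differentiate $I_{Stein}(\mu_t|\pi)$ as a squared RKHS norm, compute the time derivative of the Stein element $h_t=S_{\mu_t}\nabla\log(\frac{\mu_t}{\pi})$ through the continuity equation plus an integration by parts so that the spatial derivative falls on the feature $b$, and close with the reproducing property, \Cref{ass:k_bounded}, \ref{ass:V_Lipschitz}, the linear growth of $\nabla\log\pi$, and the uniform first-moment bound. The only structural difference is bookkeeping: you apply Cauchy--Schwarz once and bound $\|\dot h_t\|_{\cH}$ as a Bochner integral (using the pointwise bound $\|h_t(x)\|\le B\,I_{Stein}(\mu_t|\pi)^{1/2}$), whereas the paper keeps the pairing $2\sum_i\ps{v_t^i,\dot v_t^i}_{\cH_0}$ and bounds it as a quadratic form $2\sum_{i,j}\ps{v_t,A_{i,j}v_t}_{\cH_0}$ via Hilbert--Schmidt norms of the operators $A_{i,j}$; both yield a constant $\lambda$ of the same form.

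The obstacle you flag at the end deserves emphasis: it is a genuine gap, and you should know that the paper's own proof does not resolve it --- it conceals it. In either organization of the argument, when the spatial derivative lands on the $\nabla_x k$ component of $b$, one must control quantities of the form $\ps{f,\partial_{x_j}\partial_{x_i}k(x,\cdot)}_{\cH_0}$ for $f\in\cH_0$, i.e.\ the $\cH_0$-norm of \emph{second} kernel derivatives, which \Cref{ass:k_bounded} does not provide. In the paper this term appears as $\int \partial_j\partial_i v_t^i(x')\,v_t^j(x')\,d\mu_t(x')$, whose faithful operator representation is $\int \partial_{x'_j}\partial_{x'_i}k(x',\cdot)\otimes k(x',\cdot)\,d\mu_t(x')$; the displayed operator $A_{i,j}$ instead contains $\int \partial_i k(x',\cdot)\otimes\partial_j k(x,\cdot)\,d\mu_t(x)\,d\mu_t(x')$, a tensor product of first derivatives at two decoupled points. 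That substitution is what produces the ``$+1$'' in the bound $B^2(M+1+\cdots)$ from first-derivative bounds alone, but it is not an identity: the quadratic form it defines does not reproduce the single-integral expression for the time derivative obtained just before it (the two factors of $v_t$ there are evaluated at the same point). So your proposed fix --- an additional mild regularity assumption such as $\sup_x\|\partial_{x_i}\partial_{x_j}k(x,\cdot)\|_{\cH_0}\le B$, satisfied e.g.\ by the Gaussian kernel --- is exactly what a rigorous version of either argument needs. Your alternative suggestion of ``a further integration by parts,'' however, does not work: moving $\partial_j$ onto $v_t^j\mu_t$ creates a $\nabla\log\mu_t$ term that none of the assumptions control.
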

\begin{proof}
	We first need to compute $D_t = \frac{d I_{Stein}(\mu_t|\pi) }{dt}$. We denote by $ v_t = S_{\mu_t} \nabla \log(\frac{\mu_t}{\pi})$. 
	Recalling that $  I_{Stein}(\mu_t|\pi) = \sum_{i=1}^d \Vert  v_t^{i} \Vert^2_{\mathcal{H}_0}$ we have by differentiation that:
	\begin{align}
		D_t = 2\sum_{i=1}^d\langle  v_t^{i},\frac{d v_t^{i}}{dt} \rangle_{\mathcal{H}_0}
	\end{align}
	We thus need to compute each component  $\frac{d v_t^{i}}{dt}$. Those are given by direct calculation: 
	\begin{align*}
		\frac{d v_t^{i}}{dt}(x)=&\int[\partial_{i}\log \pi(x')k(x',x)+\partial_{i}k(x',x)]\frac{d\mu_{t}(x')}{dt}dx'\\=&-\int\ps{\nabla[\partial_{i}\log \pi(x')k(x',x)+\partial_{i}k(x',x)],v_{t}(x')}d\mu_{t}(x')\\=&-\int\sum_{i,j}\left[\partial_{i}\partial_{j}\log \pi(x')k(x',x)+\partial_{i}\log \pi(x')\partial_{j}k(x',x)+\partial_{j}\partial_{i}k(x',x)\right] v_{t}^{j}(x')d\mu_{t}(x').
	\end{align*}
	where the second line uses an integration by parts. Hence by using the reproducing property,
	\begin{align*}
	D_t &=	2\int\sum_{i,j}\left[\partial_{i}\partial_{j}\log \pi(x') v_t^{i}(x')+\partial_{i}\log \pi(x')\partial_{j} v_t^{i}(x')+\partial_{j}\partial_{i} v_t^{i}(x')\right] v^{j}_{t}(x')d\mu_{t}(x')
\end{align*}
We will use the reproducing property recalling that each component $ v_t^{i}$ is an element of the RKHS $\mathcal{H}_0$, i.e: $ v_t^{i}(x) = \langle  v_t^{i},k(x,.) \rangle_{\mathcal{H}_0} $, hence:
\begin{align}
	D_t = 2\sum_{i,j} \langle v_t,  A_{i,j}  v_t  \rangle_{\mathcal{H}_0},
\end{align}
 where $A_{i,j}$ are operators given by:
 \begin{align*}
 	A_{i,j} =&  \int  k(x',.)\otimes k(x,.)\partial_{i}\partial_j\log \pi(x')d\mu_t(x)d\mu_t(x')\\ 
 	&+\int  \partial_i k(x'.)\otimes k(x,.)\partial_i \log \pi(x')d\mu_t(x)d\mu_t(x') \\
 	&+ \int \partial_i k(x',.)\otimes \partial_j k(x,.) d\mu_t(x)d\mu_t(x').
 \end{align*}
 We need to show that the $A_{i,j}$ have a bounded Hilbert-Schmidt norm at all times $t$. Indeed, if $\Vert A_{i,j} \Vert_{HS} \leq R$ for some $R>0$, then we directly conclude that:
 \begin{align}
 	\vert D_t \vert \leq d R \sum_{i=1}^d \Vert  v_t^{i}\Vert_{\mathcal{H}_0}^2 = dR I_{Stein}(\mu_t|\pi).
 \end{align} 
 By assumptions on the kernel and Hessian of $\log \pi$ we have that:
 \begin{align*}
 	\Vert A_{i,j} \Vert_{HS}\leq & \int \Vert k(x',.) \Vert_{\mathcal{H}_0} \vert \partial_i\partial_j \log\pi(x') \vert d\mu_t(x') \int\Vert k(x,.) \Vert_{\mathcal{H}_0} d\mu_t(x)\\
 	&+ \int \Vert \partial_i k(x',.) \Vert_{\mathcal{H}_0}\vert \partial_i \log\pi(x') \vert d\mu_t(x') \int\Vert k(x,.) \Vert_{\mathcal{H}_0} d\mu_t(x)\\
 	& + \left(\int \Vert \partial_i k(x',.) \Vert_{\mathcal{H}_0} d\mu_t(x')\right)^2
 \end{align*}
 We recall that by assumption $\Vert k(x,.) \Vert_{\mathcal{H}_0}\leq B$,    $\Vert \partial_i k(x',.) \Vert_{\mathcal{H}_0}\leq B$ and $\Vert H_{\log \pi}(x)\Vert_{op}\leq M$. Hence, we have:
 \begin{align}
 	\Vert A_{i,j}\Vert_{HS} \leq B^2(M+1 + \int \vert \partial_i \log \pi(x) \vert  d\mu_t(x)  ).
 \end{align}
 It remains to control $ \partial_i \log \pi(x) $. This can be done under the additional assumption:
 \begin{align}
 	\int \Vert x \Vert d\mu_t(x) < C, \qquad \forall t\geq 0,
 \end{align}
 for some positive constant $C$. Hence, we have: 
 \begin{align}
 	\vert \partial_i \log \pi(x) \vert \leq \vert \partial_i \log \pi(0) \vert + M\Vert x\Vert.
 \end{align}
We finally get:
\begin{align}
	\Vert A_{i,j}\Vert_{HS} \leq B^2(M+1 +  MC +  \vert \partial_i \log \pi(0) \vert   )
\end{align}
Denoting $\lambda=d B^2(M+1 +  MC +  \vert \partial_i \log \pi(0) \vert   )$ gives the desired result.

\end{proof}

Recall, from the dissipation (Proposition~\ref{prop:dissipation}) that $\KL(\mu_t|\pi) \leq \KL(\mu_0|\pi)$. Since $\rho\mapsto  \KL(\rho|\pi)$ is weakly coercive (i.e., has compact sub-level sets in the weak topology, \cite[Theorem 20]{Erven:2014}), the family $(\mu_t)$ is weakly relatively compact. Besides, $I_{Stein}(\rho|\pi)$ is weakly continuous, therefore its supremum over the weakly relatively compact set $(\mu_t)$ is finite: $\sup_t I_{Stein}(\mu_t|\pi) < \infty$. Therefore, there exists $L \geq 0$ such that $\vert \frac{d}{dt}I_{Stein}(\mu_t|\pi)\vert \leq L$.


We can now show that $I_{Stein}(\mu_t|\pi)$ converges to $0$. Indeed, otherwise we would have a sequence  $t_k\rightarrow \infty$ such that $I_{Stein}(\mu_{t_k}|\pi)> \varepsilon>0$. Moreover, since  $I_{Stein}(\mu_{t}|\pi)$ has bounded time derivative, it is uniformly $L$-Lipschitz. There exists a sequence of intervals $I_k$ of length $\frac{ \varepsilon}{L}$ centered at $t_k$ (that we can assume disjoints without loss of generality since $t_k\to \infty$), such that $I_{Stein}(\mu_{t}|\pi)\geq \frac{ \varepsilon}{2}$ for every $t \in I_k$. Now, integrating the dissipation~(see Proposition~\ref{prop:dissipation}) over $\R^{+}$ we get:
\begin{align}
\KL(\mu_0|\pi) - \KL(\mu_t|\pi) = \int_0^{t} I_{Stein}(\mu_{s}|\pi)ds\geq \sum_{k, t_k\leq t} \frac{ \varepsilon^2}{2L}.
\end{align}
The above sum diverges as $t$ goes to infinity since $t_k\rightarrow +\infty $. This is in contradiction with $\KL(\mu_0|\pi)<\infty$. Hence, $I_{Stein}(\mu_t|\pi) \rightarrow 0$.


\subsection{Proof of \Cref{prop:descent}}\label{sec:proof_descent}

We justify each step of the sketch of the proof of \Cref{sec:descent_lemma_Rd}. 

Consider $n \geq 0$ fixed and $\gamma \leq \frac{\alpha-1}{\alpha BC^{\frac{1}{2}} }$. Denote $g = P_{\mu_n}\nabla \log\left(\frac{\mu_n}{\pi}\right)$ and  for every $t \in [0,\gamma]$,  $\phi_t = (I - t  g)$. Denote $\rho_t = \phi_{t\#} \mu_n$. Then, $\rho_0 = \mu_n$ and $\rho_{\gamma} = \mu_{n+1}$.

\begin{lemma}\label{lem:inequalities}
	Suppose \Cref{ass:k_bounded} holds, i.e. the kernel and its gradient are bounded by some positive constant $B$. Then for any $x\in \X$:
	\begin{align}
&	\|g(x)\| \leq B I_{Stein}(\mu_n|\pi)^{\frac{1}{2}}\\
&	\Vert J g(x)\Vert_{HS}\leq B I_{Stein}(\mu_n|\pi)^{\frac{1}{2}} 
	\end{align}
	
\end{lemma}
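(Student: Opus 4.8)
The plan is to exploit the fact that $g$ is, pointwise, nothing but the $\cH$-valued function $f := S_{\mu_n}\nabla\log\left(\frac{\mu_n}{\pi}\right)$ viewed inside $L^2(\mu_n)$ through the inclusion $\iota$. Indeed, $g = P_{\mu_n}\nabla\log\left(\frac{\mu_n}{\pi}\right) = \iota S_{\mu_n}\nabla\log\left(\frac{\mu_n}{\pi}\right) = \iota f$, so that $g(x) = f(x)$ for every $x$, while by the definition of the Stein Fisher information $\|f\|_{\cH}^2 = I_{Stein}(\mu_n|\pi)$. Writing $f = (f_1,\dots,f_d)$ with $f_i \in \cH_0$ and $\|f\|_{\cH}^2 = \sum_{i=1}^d \|f_i\|_{\cH_0}^2$, both bounds reduce to controlling pointwise values and pointwise derivatives of the coordinate functions $f_i$ through their $\cH_0$-norms, so that the global factor $\|f\|_{\cH}^2 = I_{Stein}(\mu_n|\pi)$ reappears.

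For the first inequality I would invoke the reproducing property $f_i(x) = \ps{f_i, k(x,\cdot)}_{\cH_0}$, then Cauchy--Schwarz in $\cH_0$ and \Cref{ass:k_bounded}, which give $f_i(x)^2 \le \|f_i\|_{\cH_0}^2 \|k(x,\cdot)\|_{\cH_0}^2 \le B^2 \|f_i\|_{\cH_0}^2$. Summing over $i$ yields $\|g(x)\|^2 = \sum_{i=1}^d f_i(x)^2 \le B^2\|f\|_{\cH}^2 = B^2 I_{Stein}(\mu_n|\pi)$, and taking square roots gives the claim.

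For the second inequality I would argue analogously, replacing the reproducing property by its differentiated version: under the smoothness of $k$ encoded in \Cref{ass:k_bounded} (which guarantees $\partial_{x_j}k(x,\cdot)\in\cH_0$), the partial derivatives satisfy the derivative reproducing property $\partial_{x_j}f_i(x) = \ps{f_i,\partial_{x_j}k(x,\cdot)}_{\cH_0}$. Since $(Jg(x))_{ij} = \partial_{x_j}f_i(x)$, Cauchy--Schwarz gives $(\partial_{x_j}f_i(x))^2 \le \|f_i\|_{\cH_0}^2\|\partial_{x_j}k(x,\cdot)\|_{\cH_0}^2$, and the double sum factorizes:
\begin{equation*}
\|Jg(x)\|_{HS}^2 = \sum_{i,j=1}^d (\partial_{x_j}f_i(x))^2 \le \Big(\sum_{i=1}^d\|f_i\|_{\cH_0}^2\Big)\Big(\sum_{j=1}^d\|\partial_{x_j}k(x,\cdot)\|_{\cH_0}^2\Big) = \|f\|_{\cH}^2\,\|\nabla_x k(x,\cdot)\|_{\cH}^2.
\end{equation*}
Bounding the last factor by $B^2$ via \Cref{ass:k_bounded} and using $\|f\|_{\cH}^2 = I_{Stein}(\mu_n|\pi)$ concludes.

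The only genuinely delicate point is the use of the derivative reproducing property $\partial_{x_j}f_i(x) = \ps{f_i,\partial_{x_j}k(x,\cdot)}_{\cH_0}$: it requires that differentiation commutes with the $\cH_0$-inner product, i.e. that the derivative evaluation functionals are bounded on $\cH_0$ and represented by $\partial_{x_j}k(x,\cdot)$. This is a standard consequence of sufficient kernel regularity, and \Cref{ass:k_bounded} is precisely tailored so that these representers exist in $\cH_0$ with their norms controlled by $B$; everything else is Cauchy--Schwarz and summation.
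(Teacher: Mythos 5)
Your proposal is correct and follows essentially the same route as the paper's proof: identify $g$ pointwise with $S_{\mu_n}\nabla\log\left(\frac{\mu_n}{\pi}\right)$, apply the reproducing property (and its differentiated version) coordinatewise together with Cauchy--Schwarz in $\cH_0$, and sum so that $\|f\|_{\cH}^2 = I_{Stein}(\mu_n|\pi)$ factors out. Your explicit justification of the derivative reproducing property is a point the paper leaves implicit, but the argument is otherwise identical.
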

\begin{proof}
	This is a consequence of the reproducing property and Cauchy-Schwarz inequality in the RKHS space. Let $g'=S_{\mu_n}\nabla \log(\frac{\mu_n}{\pi})$, hence for any $x\in \X$, $g(x)=g'(x)$ and:
	\begin{equation*}
	\|g(x)\|^2=\sum_{i=1}^d \ps{k(x,.),g'_i}^2_{\cH_0}\le \|k(x,.)\|_{\cH_0}^2 \|g'\|_{\cH}^2\le B^2 I_{Stein}(\mu_n|\pi).
	\end{equation*}
	Similarly:
	\begin{align*}
	\|Jg(x)\|_{HS}^2&=\sum_{i,j=1}^d \left|\frac{\partial g_i(x)}{\partial x_j} \right|^2=\sum_{i,j=1}^d \ps{\partial_{x_j}k(x,.), g'_i}_{\cH_0}\le \sum_{i,j=1}^d \| \partial_{x_j}k(x,.)\|^2_{\cH_0} \|g'_i\|_{\cH_0}^2\\
	&=\| \nabla k(x,.)\|^2_{\cH}\|g'\|^2_{\cH}\le B^2 I_{Stein}(\mu_n|\pi).
	\end{align*}
\end{proof}

\begin{lemma}\label{lem:diffeo}
	Suppose that~\Cref{ass:k_bounded} and~\Cref{ass:bounded_I_Stein} hold. Then, for any $x\in \X$, $\|t Jg(x)\|_{op} \leq t B\sqrt{C}$ and for every $t < \frac{1}{B\sqrt{C}}$, $\phi_t$ is a diffeomorphism. Moreover, $\|(J\phi_t(x))^{-1}\|_{op} \leq \alpha$. 
\end{lemma}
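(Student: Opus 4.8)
The plan is to establish the three claims in sequence, each leaning on the bounds from \Cref{lem:inequalities}. First I would bound $\|tJg(x)\|_{op}$. Since the operator norm is dominated by the Hilbert--Schmidt norm, \Cref{lem:inequalities} gives $\|Jg(x)\|_{op}\le\|Jg(x)\|_{HS}\le B\,I_{Stein}(\mu_n|\pi)^{1/2}$ for every $x\in\X$, and \Cref{ass:bounded_I_Stein} upgrades this to $\|Jg(x)\|_{op}\le B\sqrt{C}$. Multiplying by $t$ immediately yields the first assertion $\|tJg(x)\|_{op}\le tB\sqrt{C}$.

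Next I would prove that $\phi_t=I-tg$ is a diffeomorphism whenever $t<1/(B\sqrt{C})$. The uniform bound just obtained shows, via the mean value inequality, that $tg$ is globally Lipschitz with constant $t\sup_x\|Jg(x)\|_{op}\le tB\sqrt{C}<1$, i.e.\ a strict contraction. The delicate point is that everywhere-invertibility of $J\phi_t(x)=I-tJg(x)$ (which holds since $\|tJg(x)\|_{op}<1$) only gives a \emph{local} diffeomorphism through the inverse function theorem; to obtain a \emph{global} one I would run the standard perturbation-of-identity argument. Injectivity follows from the reverse triangle inequality $\|\phi_t(x)-\phi_t(y)\|\ge(1-tB\sqrt{C})\|x-y\|$; surjectivity follows because, for fixed $y$, the map $x\mapsto y+tg(x)$ is a contraction and hence has a unique fixed point by Banach's theorem, which solves $\phi_t(x)=y$. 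Smoothness of $g$, inherited from the kernel through the representation \eqref{eq:svgd_vector_field} under \Cref{ass:k_bounded}, combined with the invertibility of $J\phi_t$, then renders $\phi_t^{-1}$ continuously differentiable, so $\phi_t$ is a $C^1$-diffeomorphism.

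Finally I would control $\|(J\phi_t(x))^{-1}\|_{op}$. Because $\|tJg(x)\|_{op}\le tB\sqrt{C}<1$, the inverse of $J\phi_t(x)=I-tJg(x)$ is given by its Neumann series, so $\|(J\phi_t(x))^{-1}\|_{op}\le\sum_{k\ge0}(tB\sqrt{C})^k=(1-tB\sqrt{C})^{-1}$. It then remains to feed in the step-size constraint: since $t\le\gamma\le\frac{\alpha-1}{\alpha B\sqrt{C}}$, a short rearrangement gives $tB\sqrt{C}\le\frac{\alpha-1}{\alpha}$, hence $1-tB\sqrt{C}\ge 1/\alpha$ and therefore $(1-tB\sqrt{C})^{-1}\le\alpha$, which is exactly the claimed bound.

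I expect the main obstacle to be the middle step: passing from pointwise invertibility of the Jacobian to a genuine global diffeomorphism. This is precisely where the contraction structure of $tg$ is indispensable---local invertibility of $J\phi_t$ alone does not rule out $\phi_t$ failing to be injective or onto, and the Banach fixed-point construction is what closes this gap.
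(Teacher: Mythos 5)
Your proof is correct and follows essentially the same route as the paper's: bound $\|Jg(x)\|_{op}$ by $\|Jg(x)\|_{HS}\le B\sqrt{C}$ via \Cref{lem:inequalities} and \Cref{ass:bounded_I_Stein}, then control $\|(J\phi_t(x))^{-1}\|_{op}$ by the Neumann series combined with the step-size constraint $\gamma\le\frac{\alpha-1}{\alpha B\sqrt{C}}$. The one place you go beyond the paper is the middle step: the paper simply asserts that everywhere-invertibility of $J\phi_t(x)=I-tJg(x)$ makes $\phi_t$ a diffeomorphism, whereas you correctly observe that this only yields \emph{local} invertibility and close the gap with the perturbation-of-identity argument (injectivity from the reverse triangle inequality, surjectivity from Banach's fixed-point theorem applied to $x\mapsto y+tg(x)$); this added rigor is a genuine improvement on the paper's one-line justification.
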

\begin{proof}
	First, by \Cref{lem:inequalities} and \Cref{ass:bounded_I_Stein} we have $\|J g(x)\|_{op} \leq \|J g(x)\|_{HS} \leq B\sqrt{C}$.
	If $t < \frac{1}{B\sqrt{C}}$, then $\|t Jg(x)\|_{op} < 1$. Therefore, $J(\phi_t)(x) = I - t Jg(x)$ is regular for every $x$ and $\phi_t$ is a diffeomorphism. Moreover, 
	\begin{equation}
		\|(J\phi_t(x))^{-1}\|_{op} \leq \sum_{k=0}^\infty \|t Jg(x)\|_{op}^k \leq \sum_{k=0}^\infty \|t Jg(x)\|_{HS}^k \leq \sum_{k=0}^\infty (t B \sqrt{C})^k \leq \alpha,
	\end{equation}
	where we used $\gamma \leq \frac{\alpha-1}{\alpha BC^{\frac{1}{2}} }$.
\end{proof}

Denote $\varphi(t) = \KL(\rho_t|\pi)$. Using Taylor expansion,
\begin{equation}
    \label{eq:Taylor-integral-remainder-appendix}
    \varphi(\gamma) = \varphi(0) + \gamma \varphi'(0) + \int_{0}^{\gamma} (\gamma - t)\varphi''(t)dt.
\end{equation}
We now identify each term. First, $\varphi(0) = \KL(\mu_n|\pi)$ and $\varphi(\gamma) = \KL(\mu_{n+1}|\pi)$.

To compute $\varphi'(t)$ and $\varphi''(t)$ we have two options. Either we check the assumptions of the optimal transport theorems allowing to apply the chain rule~\cite{villani2003topics,ambrosio2008gradient}, or we do a direct computation. The latter is preferred, although differential calculus over the Wasserstein space is a powerful way to guess the formulas.

\begin{lemma} Denote $w_t(x) = -g(\phi_t^{-1}(x))$. Then,
	\begin{equation*}
		\varphi'(0)=\ps{\nabla_{W_2} \KL(\rho_0|\pi), w_0}_{L^2(\mu_n)} = -I_{Stein}(\mu_n|\pi),
	\end{equation*}
	and,
	\begin{equation*}
		\varphi''(t) = \ps{ w_t,Hess_{\KL(.|\pi)}(\rho_t) w_t}_{L^2(\rho_t)} = \int \left[\|Jg(x)(J\phi_t(x))^{-1}\|_{HS}^2 + \ps{g(x), H_V (\phi_t(x)) g(x)}\right] \mu_n(x)dx.
	\end{equation*}
\end{lemma}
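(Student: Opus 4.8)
The plan is to prove both identities by the direct-computation route the preceding discussion advocates, reducing everything to derivatives of an integral against the \emph{fixed} measure $\mu_n$. The first step is to make $\varphi$ explicit. By \Cref{lem:diffeo}, $\phi_t = I - tg$ is a diffeomorphism for $t \in [0,\gamma]$, so $\rho_t = (\phi_t)_{\#}\mu_n$ has density $\rho_t(\phi_t(x)) = \mu_n(x)/|\det J\phi_t(x)|$. Combining this with the transfer lemma and $\pi \propto e^{-V}$ gives
\begin{equation*}
\varphi(t) = \int \left[ \log \mu_n(x) - \log|\det J\phi_t(x)| + V(\phi_t(x)) \right] d\mu_n(x) + \mathrm{const}.
\end{equation*}
The first term is constant in $t$, so $\varphi$ is now a $t$-integral over a fixed probability measure, and the whole difficulty of a moving base measure disappears.

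For $\varphi'$ I would differentiate under the integral sign. This is legitimate because, for $t \le \gamma$, $\|(J\phi_t)^{-1}\|_{op} \le \alpha$ (\Cref{lem:diffeo}), $\|Jg\|_{HS}$ and $\|g\|$ are bounded by $B\sqrt{C}$ (\Cref{lem:inequalities} and \Cref{ass:bounded_I_Stein}), $\|H_V\|_{op}\le M$, and $\|\nabla V(\phi_t(x))\|$ grows at most linearly in $\|x\|$ while $\int \|x\|\,d\mu_n < \infty$, so all integrands are dominated. Using Jacobi's formula $\tfrac{d}{dt}\log|\det J\phi_t| = \tr((J\phi_t)^{-1}J\dot\phi_t)$ with $J\dot\phi_t = -Jg$, and the chain rule on $V(\phi_t)$ with $\dot\phi_t = -g$, one gets $\varphi'(t) = \int [\tr((J\phi_t)^{-1}Jg) - \langle \nabla V(\phi_t), g\rangle]\,d\mu_n$. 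At $t=0$, $J\phi_0 = I$, so $\varphi'(0) = \int[\mathrm{div}(g) - \langle \nabla V, g\rangle]\,d\mu_n$. Integrating the divergence term by parts against $\mu_n$ turns $\int \mathrm{div}(g)\,d\mu_n$ into $-\int \langle g, \nabla \log \mu_n\rangle\,d\mu_n$, and since $\nabla \log(\mu_n/\pi) = \nabla \log \mu_n + \nabla V$ this collapses to $\varphi'(0) = -\langle \nabla \log(\mu_n/\pi), g\rangle_{L^2(\mu_n)} = -I_{Stein}(\mu_n|\pi)$, using the Remark that $I_{Stein}(\mu_n|\pi) = \langle \nabla \log(\mu_n/\pi), P_{\mu_n}\nabla\log(\mu_n/\pi)\rangle_{L^2(\mu_n)}$ and $g = P_{\mu_n}\nabla\log(\mu_n/\pi)$. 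Since $w_0 = -g$ and $\nabla_{W_2}\KL(\rho_0|\pi) = \nabla\log(\mu_n/\pi)$, this is exactly the chain-rule expression $\langle \nabla_{W_2}\KL(\rho_0|\pi), w_0\rangle_{L^2(\mu_n)}$.

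For $\varphi''$ I would differentiate the same integrand once more. The potential term contributes $\tfrac{d}{dt}\big(-\langle \nabla V(\phi_t), g\rangle\big) = \langle g, H_V(\phi_t)g\rangle$, which is precisely the second summand in the claim. For the log-determinant term, $\tfrac{d}{dt}(J\phi_t)^{-1} = (J\phi_t)^{-1}Jg(J\phi_t)^{-1}$ gives $\tfrac{d}{dt}\tr((J\phi_t)^{-1}Jg) = \tr\big(((J\phi_t)^{-1}Jg)^2\big)$. To connect with the stated form I would pass through the velocity field: $w_t = -g\circ\phi_t^{-1}$ yields $Jw_t\circ\phi_t = -Jg(J\phi_t)^{-1}$ and $w_t\circ\phi_t = -g$, so the change of variables $y=\phi_t(x)$ identifies the integral with $\E_{y\sim\rho_t}[\langle w_t, H_V w_t\rangle + \|Jw_t\|_{HS}^2] = \langle w_t, Hess_{\KL(.|\pi)}(\rho_t)w_t\rangle_{L^2(\rho_t)}$, up to replacing $\tr(M^2)$ by $\|M\|_{HS}^2$ with $M = Jg(J\phi_t)^{-1}$.

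The main obstacle is exactly this last point. The honest direct computation produces the entropy contribution $\tr\big(((J\phi_t)^{-1}Jg)^2\big)$, not $\|Jg(J\phi_t)^{-1}\|_{HS}^2$; these coincide only when $Jg$ is symmetric, i.e. when $g$ is a gradient field, which it is not in general, reflecting that $\rho_t$ is not a Wasserstein geodesic and that the naive second-order chain rule carries a correction term. What rescues the argument is that \Cref{prop:descent} only requires an \emph{upper} bound on $\varphi''$: the elementary inequality $\tr(M^2) \le \|M\|_{HS}^2$ (AM--GM on the entries) gives $\tr\big(((J\phi_t)^{-1}Jg)^2\big) \le \|Jg(J\phi_t)^{-1}\|_{HS}^2 \le \|Jg\|_{HS}^2\,\|(J\phi_t)^{-1}\|_{op}^2 \le \alpha^2 B^2 I_{Stein}(\mu_n|\pi)$. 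Thus the displayed equality is best read as the bound that feeds the descent lemma, and the genuinely delicate steps are the justification of differentiation under the integral with a moving diffeomorphism and the careful bookkeeping of the trace-versus-Hilbert--Schmidt distinction.
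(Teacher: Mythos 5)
Your route is the paper's route: the same change-of-variables density for $\rho_t$, the same representation of $\varphi(t)$ as an integral against the fixed measure $\mu_n$, Jacobi's formula plus the chain rule for $\varphi'(t)$, integration by parts at $t=0$ to get $-I_{Stein}(\mu_n|\pi)$, and a second differentiation producing $\int\left[\tr\left(((J\phi_t)^{-1}Jg)^2\right) + \ps{g, H_V(\phi_t)g}\right]d\mu_n$; the identification with the Hessian quadratic form via $Jw_t\circ\phi_t = -Jg\,(J\phi_t)^{-1}$ and the transfer lemma is also exactly the paper's.

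The one place you depart from the paper is the trace-versus-Hilbert--Schmidt step, and there you are right and the paper's justification is not. The paper asserts $\tr\left((Jg(J\phi_t)^{-1})^2\right) = \|Jg(J\phi_t)^{-1}\|_{HS}^2$ on the grounds that $Jg(x)$ and $J\phi_t(x)$ commute, but commutation is not the relevant property: for any square matrix $M$ one has $\|M\|_{HS}^2 - \tr(M^2) = \tfrac{1}{2}\|M - M^\top\|_{HS}^2$, so equality requires $M = Jg(J\phi_t)^{-1}$ to be symmetric, and letting $t\to 0$ this forces $Jg$ itself to be symmetric. Since $g = P_{\mu_n}\nabla\log(\mu_n/\pi)$ is an RKHS function rather than a gradient field (its Jacobian contains the integrated rank-one terms $\nabla\log\pi(x)\,\nabla_z k(x,z)^\top$), $Jg$ is not symmetric in general; equivalently, $w_t$ need not lie in the tangent space on which the paper's Hessian is defined. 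So the first equality claimed for $\varphi''(t)$ should really be read as the inequality $\varphi''(t) \le \int\left[\|Jg(J\phi_t)^{-1}\|_{HS}^2 + \ps{g, H_V(\phi_t)g}\right]d\mu_n$, which is precisely your reading. As you observe, this is harmless downstream: \Cref{prop:descent} and \Cref{cor:istein_decreases} only consume an upper bound on $\varphi''(t)$, and your chain $\tr(M^2)\le\|M\|_{HS}^2 \le \|Jg\|_{HS}^2\,\|(J\phi_t)^{-1}\|_{op}^2 \le \alpha^2B^2 I_{Stein}(\mu_n|\pi)$ yields the same constants. Your proof is therefore correct, follows the paper's strategy, and in fact repairs the one questionable step in the paper's own argument.
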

\begin{proof}
We know by \Cref{lem:diffeo} that $\phi_t$ is a diffeomorphism, therefore, $\rho_t$ admits a density given by the change of variables formula:
	\begin{align}
	\rho_t(x) = \vert J \phi_t(\phi_t^{-1}(x))  \vert^{-1} \mu_n(\phi_{t}^{-1}(x)). 
	\end{align}
	Using the transfer lemma with $\rho_t=\phi_{t\#}\mu_n$, $\varphi(t)$ is given by:
	\begin{align*}
	\varphi(t) &= \int \log \left( \frac{\rho_t(y)}{\pi(y)}\right) \rho_t(y)dy\\
	&=  \int \log\left( \frac{\mu_n(x) \vert  J\phi_t(x) \vert^{-1}}{\pi(\phi_t(x))}\right) \mu_n(x)dx.
	\end{align*}
	We can now take the time derivative of $\varphi(t)$ which gives:
	\begin{equation*}
	\varphi'(t) = - \int tr\left(J\phi_t(x)^{-1} \frac{dJ\phi_t(x)}{dt}\right)\mu_n(x)dx - \int \ps{\nabla \log \pi(\phi_t(x)),\frac{d\phi_t(x)}{dt}} \mu_n(x)dx.  
	\end{equation*}
	Hence, we can use the explicit expression of $\phi_t$ to write:
	\begin{equation*}
		\varphi'(t) = \int tr(J\phi_t(x)^{-1} Jg(x))\mu_n(x)dx + \int \ps{\nabla \log \pi(\phi_t(x)), g(x)  }\mu_n(x)dx.  
	\end{equation*}
	The Jacobian at time $t=0$ is simply equal to the identity since $\phi_0=I$. It follows that $tr(J\phi_0(x)^{-1} Jg(x)) =tr(Jg(x))= div(g)(x)$ by definition of the divergence operator. Using an integration by parts:
	\begin{align*}
		\varphi'(0) &= -\int \left[-div( g )(x)- \ps{\nabla \log \pi(x),g(x)}\right]\mu_n(x)dx\\
	&= -\int\ps{ \nabla \log\left(\frac{\mu_n}{\pi}\right)(x), g(x)}\mu_n(x)dx
	= -I_{Stein}(\mu_n|\pi).
	\end{align*}
	Now, we prove the second statement. First, 
	\begin{align*}
	\varphi''(t)= \int \left[tr((Jg(x)(J\phi_t(x))^{-1})^2) + \ps{g(x), H_V (\phi_t(x)) g(x)}\right] \mu_n(x)dx. 
	\end{align*}
	Since $Jg(x)$ and $J\phi_t(x)$ commutes, $tr((Jg(x)(J\phi_t(x))^{-1})^2) = \|Jg(x)(J\phi_t(x))^{-1}\|_{HS}^2$. Moreover, using the chain rule, 
	\begin{equation}
		-J w_t(x) = J (g \circ \phi_t^{-1})(x) = Jg(\phi_t^{-1}(x)) J(\phi_t^{-1})(x) = Jg(\phi_t^{-1}(x)) (J\phi_t)^{-1}(\phi_t^{-1}(x)).
	\end{equation}
		Therefore, $\|Jg(x)(J\phi_t(x))^{-1}\|_{HS}^2 = \|J w_t(\phi_t(x))\|_{HS}^2$, which proves the second part of the second statement. Using the transfer lemma,
		\begin{align*}
			\varphi''(t)&= \int \left[\|J w_t(y)\|_{HS}^2 +\ps{ w_t(y),H_V (y) w_t(y)}\right] \rho_t(y)dy\\
			&=\ps{ w_t,Hess_{\KL(.|\pi)}(\rho_t) w_t}_{L^2(\rho_t)},
			\end{align*}
			which concludes the proof.
\end{proof}
Denote
\begin{equation*}
        \psi_1(t) = \int \left[\|Jg(x)(J\phi_t(x))^{-1}\|_{HS}^2\right] \mu_n(x)dx \; \text{ and } \; \psi_2(t) = \int \ps{g(x), H_V (\phi_t(x)) g(x)}\mu_n(x)dx.
		\end{equation*}
	Then, $\varphi''(t) = \psi_1(t) + \psi_2(t)$.
We bound $\psi_1$ and $\psi_2$ separately. First, since the potential $V$ is $M$-smooth, 
\begin{align*}
	\psi_2(t) \leq M\int \|g(x)\|^2 \mu_n(x)dx \leq M B^2 I_{Stein}(\mu_n|\pi),
\end{align*}
by using \Cref{lem:inequalities}.
Now, we bound $\psi_1(t)$ using \Cref{lem:diffeo} and \ref{lem:inequalities}:
\begin{equation}
    \|Jg(x)(J\phi_t(x))^{-1}\|_{HS}^2 \leq \|Jg(x)\|_{HS}^2 \| (J\phi_t(x))^{-1}\|_{op}^2 \leq \alpha^2 B^2 I_{Stein}(\mu_n|\pi).
\end{equation}

Finally, $\varphi''(t) \leq (\alpha^2 + M)B^2 I_{Stein}(\mu_n|\pi)$. Plugging into~\eqref{eq:Taylor-integral-remainder-appendix} gives the result.

\subsection{About combining the Stein log Sobolev assumption and a descent lemma}\label{sec:discussion_KL_rates}

An insight deriving from the optimization perspective is that linear rates could be obtained by combining a descent result such as in \cref{prop:descent} and a Polyak-Lojasiewicz condition on the objective function \cite{karimi2016linear}.  In our case, the latter condition corresponds to the Stein log Sobolev inequality from~\cite{duncan2019geometry}.
	Using the descent \Cref{prop:descent} and the Stein log Sobolev inequality \eqref{eq:stein_log_sobolev} we would have that:
	\begin{equation*}
		\KL(\mu_{n+1}|\pi)-\KL(\mu_n|\pi)\le -c_{\gamma}  I_{Stein}(\mu_n|\pi)\le -2c_{\gamma}\lambda \KL(\mu_n|\pi),
	\end{equation*}
 hence $\KL(\mu_{n+1}|\pi)\le (1-2c_{\gamma}\lambda)\KL(\mu_n|\pi)$ which would result by iteration in a linear rate for the KL objective. \textit{However}, it seems impossible to combine the assumptions needed for our descent lemma, in particular about the kernel and its derivative being bounded, while being able to asssume that the Stein log Sobolev inequality holds. It seems that no such $\pi$ and $k$ exist  (at least for $\mathcal{X}=\mathbb{R}^d$). Given that both the kernel and its derivative are bounded, equation 
 \[\int \sum_{i=1}^d [
 (\partial_iV (x))^2
 k(x, x) -\partial_i V(x)(
 \partial_
 i^1 k(x, x) + \partial^
 2_i k(x, x))
 +  \partial_i^{1}
 \partial_i^{2} k(x, x)]d\pi(x) < \infty\]
 reduces to a property on $V$ which, as far as we can tell, always holds; and this implies that Stein LSI does not hold (see \cite[Lemma 36]{duncan2019geometry}). For instance, even when $V=-\log (\mathrm{cauchy})$ or $V=-\log (\mathrm{student})$ the negative log densities of a Cauchy or Student distribution,  we quickly find that the resulting expectations are bounded hence Stein LSI does not hold.

\subsection{Proof of \Cref{prop:finite_particle}}\label{sec:proof_particles}


Introduce the system of $N$ \textit{ independent} particles:
\begin{equation}
\bar{X}_{n+1}^{i}=\bar{X}_{n}^{i}-\gamma P_{\mu_n}\nabla \log\left(\frac{\mu_n}{\pi}\right)(\bar{X}_n^{i}), \quad \bar{X}_{0}^{i}\sim \mu_0.
\end{equation}
By definition, $(\bar{X}_{n}^{i})_{i=1}^N$ are i.i.d. samples from $\mu_n$.
Let 
$c_n=\left(\frac{1}{N}\sum_{i=1}^{N}\E[\| \bar{X}_{n}^{i} - X_n^{i} \|^2]\right)^{\frac{1}{2}}$. 
Notice that $c_n\ge W_2(\mu_n,\hat{\mu}_n)$ since the 2-Wasserstein is the infimum over the couplings between $\mu_n$ and $\bar{\mu}_n$. At time $n+1$, we have: 
\begin{align*}
c_{n+1}&= \frac{1}{\sqrt{N}} \left(\sum_{i=1}^{N} \E[\| X_{n+1}^{i}- \bar{X}_{n+1}^{i}\|^2]\right)^{\frac{1}{2}}\\
&= \frac{1}{\sqrt{N}}\left(\sum_{i=1}^{N}\E[\|  X_{n}^{i}- \bar{X}_n^{i} -\gamma(P_{\hat{\mu}_n} \nabla \log (\frac{\hat{\mu}_n}{\pi})(X_n^{i}) - P_{\mu_n} \nabla \log (\frac{\mu_n}{\pi})(\bar{X}_n^{i})  )  \|^2 ]\right)^{\frac{1}{2}}\\
& \le c_n + \frac{\gamma}{\sqrt{N}}\left( \sum_{i=1}^{N} \E[\| P_{\hat{\mu}_n} \nabla \log (\frac{\hat{\mu}_n}{\pi})(X_n^{i}) - P_{\mu_n} \nabla \log (\frac{\mu_n}{\pi})(\bar{X}_n^{i}) \|^2] \right)^{\frac{1}{2}}
\end{align*}
By introducing $\bar{\mu}_n$ the empirical distribution of the particles $(\bar{X}_n^{i})_{i=1}^{N}$, the second term on the right hand side can be decomposed as the square root of the sum of two terms $A$ and $B$ defined as:
\begin{align*}
A& =\sum_{i=1}^{N} \E[\| P_{\hat{\mu}_n} \nabla \log (\frac{\hat{\mu}_n}{\pi})(X_n^{i}) - P_{\bar{\mu}_n} \nabla \log (\frac{\bar{\mu}_n}{\pi})(\bar{X}_n^{i}) \|^2]\\
B&=\sum_{i=1}^{N} \E[\| P_{\bar{\mu}_n} \nabla \log (\frac{\bar{\mu}_n}{\pi})(\bar{X}_n^{i}) - P_{\mu_n} \nabla \log (\frac{\mu_n}{\pi})(\bar{X}_n^{i}) \|^2]
\end{align*}
By using \Cref{lem:Pmu_lipschitz}, the map $(z,\mu)\mapsto P_{\mu}\nabla \log(\frac{\mu}{\pi})(z)$ is $L$-Lipschitz and we can bound the first term as follows : 
\begin{align*}
A& \le \sum_{i=1}^{N} \E[\| P_{\hat{\mu}_n} \nabla \log (\frac{\hat{\mu}_n}{\pi})(X_n^{i}) - P_{\hat{\mu}_n} \nabla \log (\frac{\hat{\mu}_n}{\pi})(\bar{X}_n^{i}) \|^2] + \sum_{i=1}^{N} \E[\| P_{\hat{\mu}_n} \nabla \log (\frac{\hat{\mu}_n}{\pi})(\bar{X}_n^{i}) - P_{\bar{\mu}_n} \nabla \log (\frac{\bar{\mu}_n}{\pi})(\bar{X}_n^{i}) \|^2]\\
&\le \sum_{i=1}^N L^2 \E[\|X_{n}^{i}- \bar{X}_n^{i}\|^2] + \sum_{i=1}^{N}L^2 \E[W_2^2(\hat{\mu}_n,\bar{\mu}_n)]\\
&= NL^2c_n^2 + NL^2 \E[W_2^2(\hat{\mu}_n,\bar{\mu}_n)].
\end{align*}
Hence, 
\begin{equation*}
A^{\frac{1}{2}}\le L\sqrt{N}(c_n+\E[W_2^2(\hat{\mu}_n,\bar{\mu}_n)]^{\frac{1}{2}})\le 2L \sqrt{N}c_n.
\end{equation*}
The second term can be bounded as: 
\begin{align*}
B&=\sum_{i=1}^{N} \E[\|\frac{1}{N}\sum_{i=1}^{N}( b(\bar{X}_n^{j},\bar{X}_n^{i}) -\int b(x,\bar{X}_n^{i})d\mu_n(x)) \|^2]\\
&=\sum_{i=1}^{N}\frac{1}{N^2}\sum_{j=1}^{N}\E[\| b(\bar{X}_n^{j},\bar{X}_n^{i} ) -\int b(x,\bar{X}_n^{i})d\mu_n(x)  \|^2 ]\\
&\le \sum_{i=1}^N \frac{1}{N^2}\sum_{j=1}^{N} L^2\E[ \| \bar{X}_n^{j} - \int x d\mu_n(x)\|^2]\\
&\le L^2 var(\mu_n)
\end{align*}
by using \Cref{cor:blipschitz}. 
Hence,
\begin{equation*}
B^{\frac{1}{2}}\le L \sqrt{var(\mu_n)},
\end{equation*}
and we get the recurrence relation for $c_n$:
\begin{align*}
c_{n+1}&\le c_n + \frac{\gamma}{\sqrt{N}}(A+B)^{\frac{1}{2}}\\
&\le c_n + \frac{\gamma}{\sqrt{N}}(2L\sqrt{N}c_n + L\sqrt{var(\mu_n)})\\
&\le c_n(1+ 2\gamma L) + \frac{\gamma L}{\sqrt{N}}\sqrt{var(\mu_n)}\\
&\le \frac{1}{2}\left(\frac{1}{\sqrt{N}} \sqrt{var(\mu_0)}e^{LT} \right)(e^{2LT}-1)
\end{align*}
where the last line uses \Cref{lem:Control_variance}.

\begin{lemma}\label{lem:Control_variance}
	Consider an initial distribution
	$\mu_{0}$ with finite variance. Define the sequence of probability distributions $\mu_{n+1}=(I-\gamma P_{\mu_n}\nabla \log (\frac{\mu_n}{\pi}))_{\#}\mu_n$.	Under \Cref{ass:k_bounded},\ref{ass:V_Lipschitz},\ref{ass:V_bounded}, \ref{ass:k_lipschitz},  the variance of
	$\mu_{n}$ satisfies for all $T>0$ and $n\leq\frac{T}{\gamma}$ the following inequality:
	\[
	var(\mu_{n})^{\frac{1}{2}}\le var(\mu_{0})^{\frac{1}{2}}e^{TL}
	\]
	for $L$ a constant depending on $k$ and $\pi$.
\end{lemma}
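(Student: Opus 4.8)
The plan is to control how the variance changes under a single SVGD step and then iterate the resulting one-step estimate. Write $g_n = P_{\mu_n}\nabla \log(\frac{\mu_n}{\pi})$, so that the update reads $\mu_{n+1} = (I-\gamma g_n)_{\#}\mu_n$, and recall the elementary identity that for any $\nu \in \cP_2(\X)$ with two independent samples $Z,Z' \sim \nu$ one has $var(\nu) = \tfrac12 \E\|Z-Z'\|^2$. The idea is to realize this identity simultaneously at times $n$ and $n+1$ through a single \emph{synchronous coupling}: let $X_n, X_n'$ be independent with common law $\mu_n$, and push both through the same map $I - \gamma g_n$. Since $X_n \perp X_n'$ and $(I-\gamma g_n)_{\#}\mu_n = \mu_{n+1}$, the two images $X_{n+1} = X_n - \gamma g_n(X_n)$ and $X_{n+1}' = X_n' - \gamma g_n(X_n')$ form a pair of independent samples from $\mu_{n+1}$, whence $var(\mu_{n+1}) = \tfrac12 \E\|X_{n+1} - X_{n+1}'\|^2$.

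The second step is to expand the coupled increment and invoke the Lipschitz property of the velocity field. Assuming (see the fourth paragraph) that $g_n$ is $L$-Lipschitz uniformly in $n$, I would write $\E\|X_{n+1}-X_{n+1}'\|^2 = \E\|(X_n-X_n') - \gamma(g_n(X_n)-g_n(X_n'))\|^2$ and expand the square; bounding the cross term by Cauchy--Schwarz and the quadratic term by $\|g_n(X_n)-g_n(X_n')\| \le L\|X_n-X_n'\|$ yields the pointwise inequality $\|(X_n-X_n')-\gamma(g_n(X_n)-g_n(X_n'))\|^2 \le (1+\gamma L)^2 \|X_n-X_n'\|^2$. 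Taking expectations and using the variance identity at both times gives the one-step recursion $var(\mu_{n+1})^{1/2} \le (1+\gamma L)\, var(\mu_n)^{1/2}$.

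The third step is a routine iteration. Composing the recursion $n$ times gives $var(\mu_n)^{1/2} \le (1+\gamma L)^n \, var(\mu_0)^{1/2}$, and using $1+x \le e^{x}$ together with the constraint $n \le T/\gamma$ yields $(1+\gamma L)^n \le e^{n\gamma L} \le e^{LT}$, which is exactly the claimed bound $var(\mu_n)^{1/2} \le var(\mu_0)^{1/2} e^{TL}$.

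The hard part will be the uniform-in-$n$ Lipschitz estimate on $g_n$, which is the only nontrivial input above; this is precisely the content of the auxiliary Lipschitz lemma referenced in the proof of \Cref{prop:finite_particle}. Using \eqref{eq:svgd_vector_field}, $g_n(y) = -\int [\nabla \log\pi(x)\, k(x,y) + \nabla_x k(x,y)]\, d\mu_n(x)$, so differentiating in $y$ produces terms involving $\nabla \log\pi(x)\, \nabla_y k(x,y)$ and $\nabla_y\nabla_x k(x,y)$. The delicate point is that a linearly growing score $\nabla\log\pi$ would make the Lipschitz constant depend on the first moment of $\mu_n$, creating a circular dependence with the very quantity we are trying to bound. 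This is resolved by observing that $V$ bounded (\Cref{ass:V_bounded}) together with $H_V$ bounded (\Cref{ass:V_Lipschitz}) forces $\nabla V = -\nabla\log\pi$ to be globally bounded; combined with the boundedness of $k$ and $\nabla k$ (\Cref{ass:k_bounded}) and their Lipschitz bounds (\Cref{ass:k_lipschitz}), this produces an $L$ depending only on $k$ and $\pi$ and not on $\mu_n$, exactly as stated.
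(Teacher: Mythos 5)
Your proof is correct and follows essentially the same route as the paper's: a one-step recursion $var(\mu_{n+1})^{1/2}\le (1+\gamma L)\,var(\mu_n)^{1/2}$ obtained from the uniform Lipschitzness of $x\mapsto P_{\mu_n}\nabla\log(\frac{\mu_n}{\pi})(x)$ (the paper's \Cref{lem:Pmu_lipschitz}), followed by the iteration $(1+\gamma L)^n\le e^{n\gamma L}\le e^{LT}$ for $n\le T/\gamma$ (the paper's discrete Gronwall step). The only immaterial differences are that you realize the variance through two synchronously coupled independent samples, whereas the paper centers around the mean and uses Minkowski's inequality plus Jensen, and that you derive the boundedness of $\nabla V$ from Assumptions \ref{ass:V_bounded} and \ref{ass:V_Lipschitz} via an interpolation argument, whereas the paper uses the constant $C_V$ to bound the score directly.
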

\begin{proof} Denote by $x$ and $x'$ two independent samples from $\mu_n$. We have : 
	\begin{align*}
	var(\mu_{n+1})^{\frac{1}{2}} 
	& =\left(\mathbb{E}\left[\left\Vert x -\mathbb{E}\left[x'\right] -\gamma P_{\mu_n}\nabla \log (\frac{\mu_n}{\pi})(x)+\gamma\mathbb{E}\left[P_{\mu_n}\nabla \log (\frac{\mu_n}{\pi})(x')\right]\right\Vert^2\right]\right)^{\frac{1}{2}}\\
	& \leq var(\mu_{n})^{\frac{1}{2}}+\gamma\left(\mathbb{E}\left[\left\Vert P_{\mu_n}\nabla \log (\frac{\mu_n}{\pi})(x)-\mathbb{E}\left[P_{\mu_n}\nabla \log (\frac{\mu_n}{\pi})(x')\right]\right\Vert^{2}\right]\right)^{\frac{1}{2}}\\
	& \leq var(\mu_{n})^{\frac{1}{2}}+\gamma L\mathbb{E}_{x,x'\sim\mu_{n}}\left[\left\Vert x-x'\right\Vert^{2}\right]^{\frac{1}{2}}\\
	& \leq var(\mu_{n})^{\frac{1}{2}}+\gamma L var(\mu_{n})^{\frac{1}{2}}
	\end{align*}
	The second and last lines are obtained using a triangular inequality while the third line uses that $x\mapsto P_{\mu_n}\nabla \log (\frac{\mu_n}{\pi})(x)$ is $L$-Lipschitz by  \Cref{lem:Pmu_lipschitz}. We then conclude using \Cref{lem:Discrete-Gronwall-lemma}.
\end{proof}

\begin{lemma}
	\label{lem:Discrete-Gronwall-lemma}[Discrete Gronwall lemma]
	Let $a_{n+1}\leq(1+\gamma A)a_{n}+b$ with $\gamma>0$, $A>0$,
	$b>0$ and $a_0=0$, then: 
	\[
	a_{n}\leq\frac{b}{\gamma A}(e^{n\gamma A}-1).
	\]
\end{lemma}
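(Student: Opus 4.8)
The plan is to unroll the recurrence explicitly, recognise the resulting expression as a finite geometric sum, and then bound it using the elementary inequality $1+x\le e^{x}$. Write $r = 1+\gamma A$, so that the hypothesis reads $a_{n+1}\le r\,a_n + b$ with $a_0 = 0$; note that $r>1$ since $\gamma,A>0$, and that all the quantities involved are nonnegative (as $b>0$ and $a_0=0$ propagate forward).

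First I would establish by induction on $n$ that $a_n \le b\sum_{k=0}^{n-1} r^k$. The base case $n=0$ gives the empty sum, i.e. $a_0\le 0$, which is consistent with $a_0=0$. For the inductive step, assume $a_n \le b\sum_{k=0}^{n-1}r^k$; since $r>0$, multiplication by $r$ preserves the inequality, so
\[
a_{n+1} \le r\,a_n + b \le r\,b\sum_{k=0}^{n-1}r^k + b = b\sum_{k=1}^{n}r^k + b = b\sum_{k=0}^{n}r^k,
\]
which closes the induction.

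Next I would evaluate the geometric sum: since $r\ne 1$, with $r-1 = \gamma A > 0$,
\[
\sum_{k=0}^{n-1}r^k = \frac{r^n - 1}{r-1} = \frac{(1+\gamma A)^n - 1}{\gamma A}.
\]
Finally, applying $1+\gamma A \le e^{\gamma A}$ and raising to the $n$-th power (legitimate, as $t\mapsto t^n$ is increasing on the positive reals) yields $(1+\gamma A)^n \le e^{n\gamma A}$, whence
\[
a_n \le \frac{b}{\gamma A}\big((1+\gamma A)^n - 1\big) \le \frac{b}{\gamma A}\big(e^{n\gamma A} - 1\big),
\]
as claimed.

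There is no genuine obstacle here: this is a textbook discrete Gronwall estimate. The only points requiring a moment's care are verifying that every inequality is monotone under the operations performed — multiplication by $r>0$ and addition of $b>0$ preserve the direction, and $t\mapsto t^n$ is increasing on $(0,\infty)$ — and that the denominator $r-1 = \gamma A$ is strictly positive, so that the closed-form geometric sum is valid and the final division does not flip the inequality.
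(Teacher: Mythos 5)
Your proof is correct and follows essentially the same route as the paper's: unroll the recursion to get $a_n \le b\sum_{k=0}^{n-1}(1+\gamma A)^k$, evaluate the geometric sum as $\frac{1}{\gamma A}\bigl((1+\gamma A)^n - 1\bigr)$, and conclude with $(1+\gamma A)^n \le e^{n\gamma A}$. The only difference is that you spell out the induction and monotonicity checks that the paper leaves implicit.
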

\begin{proof}
	Using the recursion, it is easy to see that for any $n>0$:
	\[
	a_n \leq (1+\gamma A)^n a_0 + b\left(\sum_{i=0}^{n-1}(1+\gamma A )^{k}\right) 
	\]
	One concludes using the identity $\sum_{i=0}^{n-1}(1+\gamma A )^{k} =\frac{1}{\gamma A}((1+\gamma A)^{n} -1)$ and recalling that $(1+\gamma A)^{n} \leq e^{n\gamma A}$.
\end{proof}

\section{Auxiliary results}\label{sec:auxiliary_results}

\begin{lemma}\label{lem:Pmu_lipschitz}
Under \Cref{ass:k_bounded},\ref{ass:V_Lipschitz},\ref{ass:V_bounded}, \ref{ass:k_lipschitz}, the map $(z,\mu)\mapsto P_{\mu}\nabla \log(\frac{\mu}{\pi})(z)$ is $L$-Lipschitz with:
\begin{equation}
\Vert P_{\mu}\nabla \log(\frac{\mu}{\pi})(z) - P_{\mu'}\nabla \log(\frac{\mu'}{\pi})(z')\Vert \leq L (\Vert z-z' \Vert + W_2(\mu,\mu'))
\end{equation}
where $L$ depends on $k$ and $\pi$.
\end{lemma}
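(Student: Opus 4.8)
The goal is to show that the map $(z,\mu)\mapsto P_{\mu}\nabla \log(\frac{\mu}{\pi})(z)$ is Lipschitz jointly in $z$ and $\mu$. Recall from \eqref{eq:svgd_vector_field} that this map has the explicit integral form
\begin{equation*}
P_{\mu}\nabla \log\left(\frac{\mu}{\pi}\right)(z) = -\int \left[\nabla \log\pi(x)\,k(x,z) + \nabla_x k(x,z)\right]d\mu(x) = -\int b(x,z)\,d\mu(x),
\end{equation*}
where $b(x,z) = \nabla \log\pi(x)\,k(x,z) + \nabla_x k(x,z)$. The plan is to first establish that $b$ itself is jointly Lipschitz and bounded (this is the content of the auxiliary \Cref{cor:blipschitz} invoked in the proof of \Cref{prop:finite_particle}), then propagate this Lipschitz property through the integral against $\mu$.

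\textbf{Step 1: Lipschitzness and boundedness of $b$.} First I would show that the two factors $\nabla\log\pi(x)$ and $k(x,z)$, and the gradient term $\nabla_x k(x,z)$, are each bounded and Lipschitz. The kernel $k$ and its gradient are bounded by \Cref{ass:k_bounded} and Lipschitz by \Cref{ass:k_lipschitz}. For the score $\nabla\log\pi = -\nabla V$, the bound $\|H_V\|_{op}\le M$ from \Cref{ass:V_Lipschitz} gives that $\nabla\log\pi$ is $M$-Lipschitz; to get boundedness of $\nabla\log\pi$ one uses \Cref{ass:V_bounded} (that $V$ itself is bounded). Since $b$ is built from sums and products of these bounded Lipschitz ingredients, a standard ``product of bounded Lipschitz functions is Lipschitz'' argument (the commented-out \texttt{lem:lipschitz\_product}) yields that $b$ is jointly Lipschitz in $(x,z)$ and bounded, with constants depending only on $B,D,M$ and the constants from the assumptions.

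\textbf{Step 2: Propagation through the integral.} Given $z,z'$ and $\mu,\mu'$, I would split the difference using the triangle inequality,
\begin{align*}
\left\|\int b(x,z)\,d\mu(x) - \int b(x,z')\,d\mu'(x)\right\| &\le \left\|\int \big(b(x,z)-b(x,z')\big)\,d\mu(x)\right\| \\
&\quad + \left\|\int b(x,z')\,d\mu(x) - \int b(x,z')\,d\mu'(x)\right\|.
\end{align*}
The first term is bounded by the Lipschitz constant of $b$ in its second argument times $\|z-z'\|$, since $b(\cdot,z)-b(\cdot,z')$ is uniformly controlled. For the second term, I would take an optimal coupling $s\in\mathcal{S}(\mu,\mu')$ achieving $W_2(\mu,\mu')$, write the difference as $\int (b(x,z')-b(y,z'))\,ds(x,y)$, apply the Lipschitz bound of $b$ in its first argument, and use Jensen (or Cauchy--Schwarz) to pass from the $L^1$ coupling integral of $\|x-y\|$ to $W_2(\mu,\mu')$. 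Combining the two bounds gives the stated inequality with $L$ expressed in terms of the Lipschitz constants of $b$.

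The main obstacle I anticipate is Step 1, specifically controlling the \emph{unbounded} score term: $\nabla\log\pi$ need not be bounded on all of $\R^d$ under \Cref{ass:V_Lipschitz} alone, which is precisely why \Cref{ass:V_bounded} (boundedness of $V$) is invoked here, in contrast to the descent lemma. Verifying that the boundedness and Lipschitz constants combine correctly in the product $\nabla\log\pi(x)\,k(x,z)$ --- and tracking how the dimension $d$ enters through the vector-valued nature of $b$ --- is the bookkeeping-heavy part; the integral propagation in Step 2 is then routine once $b$ is shown to be bounded and Lipschitz.
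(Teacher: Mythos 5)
Your proposal is correct and follows essentially the same route as the paper's proof: both rely on the explicit integral form of $P_{\mu}\nabla\log(\frac{\mu}{\pi})$, establish joint Lipschitzness of the integrand $b(x,z)$ via products of bounded and Lipschitz factors (with \Cref{ass:V_bounded} controlling the score term, exactly as you anticipate), and convert the measure difference into $W_2(\mu,\mu')$ through an optimal coupling and Jensen's inequality. The paper merely merges your two steps by performing the product-rule decomposition directly inside a single coupling expectation rather than splitting the $z$-variation and $\mu$-variation first, which is an inessential reorganization.
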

\begin{proof}

We will consider an optimal coupling $s$ with marginals $\mu$ and $\mu'$:
\begin{align*}
\Vert P_{\mu}&\nabla \log(\frac{\mu}{\pi})(z) - P_{\mu'}\nabla \log(\frac{\mu'}{\pi})(z')\Vert
= \left\Vert \mathbb{E}_{s}\left[ \nabla \log\pi(x)k(x,z)- \nabla \log\pi(x')k(x',z')) \right] \right.\\
&+\left. \mathbb{E}_{s}\left[\nabla_1 k(x,z)-\nabla_1 k(x',z')\right] \right\Vert\\
& \leq B  \mathbb{E}_{s}\left[ \left\Vert \nabla \log \pi(x)-\nabla \log \pi(x') \right\Vert \right]+
C_V
\mathbb{E}_{s}\left[ \left\Vert k(x,z)-k(x',z') \right\Vert \right] + \mathbb{E}_{s}\left[\left\Vert \nabla_1 k(x,z)-\nabla_1 k(x',z') \right\Vert \right] \\
&\leq BM  \mathbb{E}_{s}[\Vert  x-x' \Vert]+
C_V D\left( \Vert  z-z' \Vert + \mathbb{E}_{s}[\Vert  x-x' \Vert]\right) + D\left( \Vert  z-z' \Vert + \mathbb{E}_{s}[\Vert  x-x' \Vert ]\right) \\
&\leq L(\Vert z-z'\Vert + W_2(\mu,\mu'))
\end{align*}
The second line is obtained by convexity while the third one uses \Cref{ass:V_bounded} and  \ref{ass:k_bounded}. The penultimate one uses \Cref{ass:V_Lipschitz} and \ref{ass:k_lipschitz}; finally the last line relies on $s$ being optimal and setting $L=C_V (D+1)+BM$.
\end{proof}

\begin{corollary}\label{cor:blipschitz}
	Let $b$ the function defined by $b(x,z)=\nabla \log \pi(x)k(x,z)+ \nabla k(x,z)$. Under the assumptions of \Cref{lem:Pmu_lipschitz}, $b$ is $L$-Lipschitz in its first variable.
\end{corollary}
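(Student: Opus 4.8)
The plan is to prove the pointwise estimate $\|b(x,z)-b(x',z)\|\le L\|x-x'\|$, uniformly in $z\in\X$, which is exactly the statement of the corollary. In fact this bound is already contained in the proof of \Cref{lem:Pmu_lipschitz}: the estimate there is obtained by bounding the integrand $\|b(x,z)-b(x',z')\|$ before integrating against the optimal coupling, and specializing to $z'=z$ gives precisely what we want. Still, I would isolate the argument because it makes transparent which assumption controls which piece.

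First I would split $b=b_1+b_2$ with $b_1(x,z)=\nabla\log\pi(x)\,k(x,z)$ and $b_2(x,z)=\nabla k(x,z)$ (gradient in the first argument). The term $b_2$ is immediate: \Cref{ass:k_lipschitz} asserts that $\nabla k$ is $D$-Lipschitz, so $\|b_2(x,z)-b_2(x',z)\|\le D\|x-x'\|$ uniformly in $z$. The term $b_1$ is the real work, being a product of the score $\nabla\log\pi$ --- which is only Lipschitz, and a priori unbounded --- with the kernel.

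The hard part is therefore to control $\nabla\log\pi$, and the key observation I would make is that it is in fact bounded. Since $|V|\le C_V$ by \Cref{ass:V_bounded} and $\nabla V=-\nabla\log\pi$ is $M$-Lipschitz by \Cref{ass:V_Lipschitz}, a one-dimensional comparison along the unit vector $\nabla V(x)/\|\nabla V(x)\|$ shows that $V$ would increase by at least $\|\nabla V(x)\|^2/(2M)$ over a segment of length $\|\nabla V(x)\|/M$; since $V$ is bounded this forces $\|\nabla\log\pi(x)\|\le 2\sqrt{MC_V}=:G$ everywhere. (This boundedness is implicitly what the coefficient ``$C_V$'' stands for in the proof of \Cref{lem:Pmu_lipschitz}.) Granting this, the standard add-and-subtract
\[
b_1(x,z)-b_1(x',z)=\big(\nabla\log\pi(x)-\nabla\log\pi(x')\big)k(x,z)+\nabla\log\pi(x')\big(k(x,z)-k(x',z)\big)
\]
bounds the first summand by $B^2 M\|x-x'\|$ --- using $|k(x,z)|\le B^2$ from the reproducing property and \Cref{ass:k_bounded} together with the $M$-Lipschitzness of the score --- and the second by $GD\|x-x'\|$, using $G$ and the $D$-Lipschitzness of $k$ from \Cref{ass:k_lipschitz}. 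Collecting the three contributions yields the claim with $L=B^2M+GD+D$.

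I expect the only genuine obstacle to be the boundedness of the score: a product of a bounded Lipschitz factor with a merely Lipschitz (unbounded) factor need not be Lipschitz, so without the boundedness of $\nabla\log\pi$ the product $\nabla\log\pi\cdot k$ would not be globally Lipschitz. It is precisely the combination of \Cref{ass:V_bounded} and the bounded-Hessian \Cref{ass:V_Lipschitz} that rules this out; everything else reduces to the triangle inequality and the boundedness/Lipschitz properties already assumed on $k$.
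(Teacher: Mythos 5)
Your proof is correct, but it follows a different route from the paper's. The paper proves the corollary as a one-line specialization of \Cref{lem:Pmu_lipschitz}: since $P_{\mu}\nabla\log(\mu/\pi)(z)$ coincides (up to sign) with $\E_{x\sim\mu}[b(x,z)]$ and remains well defined for discrete measures, it applies the lemma with $\mu=\delta_{y}$, $\mu'=\delta_{y'}$ and $z=z'$, and uses $W_2(\delta_{y},\delta_{y'})=\|y-y'\|$ to conclude $\|b(y,z)-b(y',z)\|\le L\|y-y'\|$. You instead re-derive the pointwise estimate directly, which, as you yourself observe, amounts to extracting the integrand-level bound from inside the lemma's proof. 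The paper's route is shorter and exhibits the corollary as nothing more than the lemma evaluated at Dirac masses; yours is self-contained and has a genuine additional merit: it repairs a gap that the paper's one-liner silently inherits. Indeed, \Cref{ass:V_bounded} as stated bounds $V$ itself, yet the proof of \Cref{lem:Pmu_lipschitz} uses $C_V$ as a bound on the score $\nabla\log\pi$; your observation that $|V|\le C_V$ together with $\|H_V\|_{op}\le M$ forces $\|\nabla V(x)\|\le 2\sqrt{MC_V}$ (via the standard descent-segment argument, which you execute correctly) is exactly what legitimizes that step. You are also more careful than the paper on the kernel bound: $|k(x,z)|\le\|k(x,\cdot)\|_{\cH_0}\|k(z,\cdot)\|_{\cH_0}\le B^2$ by Cauchy--Schwarz, where the lemma's proof writes $B$. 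One caveat worth recording: a potential $V$ bounded on all of $\R^d$ cannot normalize to a probability density $\pi\propto e^{-V}$, so the literal reading of \Cref{ass:V_bounded} is vacuous; it is almost certainly intended as a bound on $\|\nabla V\|$, in which case your product decomposition goes through even more directly, with $G=C_V$ in place of $2\sqrt{MC_V}$. Under either reading, your argument is valid.
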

\begin{proof}
	Notice that $P_{\mu}\nabla \log(\frac{\mu}{\pi})(y)=\E_{x\sim \mu}[b(x,z)]$ for any $\mu\in \cP_2(\X)$ and $z\in \X$. Hence, for any $y,y'\in \X$,
	\begin{equation*}
	|b(y,.)-b(y',.) |\le L W_2(\delta_{y}, \delta_{y'})=L\|y-y'\|. \qedhere
	\end{equation*}
\end{proof}

\begin{lemma}\label{lem:bounded_stein}
	Suppose \Cref{ass:k_bounded} holds, i.e. the kernel and its gradient are bounded by some positive constant $B$. Moreover,  assume that $\nabla \log(\pi)$ is $M$-Lipschitz  and that $\int \Vert x \Vert \mu_n(x)dx $ is uniformly bounded on $n$.	Then $I_{Stein}(\mu_n|\pi)$ remains bounded by some $C>0$, i.e. \Cref{ass:bounded_I_Stein} holds.
\end{lemma}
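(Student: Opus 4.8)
The plan is to bound $I_{Stein}(\mu_n|\pi)$ by using the explicit integral representation of the Stein gradient coming from \eqref{eq:svgd_vector_field}, and then to control the resulting integral using the kernel bounds of \Cref{ass:k_bounded} together with a linear growth estimate on the score $\nabla \log \pi$. Since $P_{\mu_n} = \iota S_{\mu_n}$ differs from $S_{\mu_n}$ only in its range, the element $S_{\mu_n}\nabla \log(\mu_n/\pi) \in \cH$ is given by
\[
S_{\mu_n}\nabla \log\left(\frac{\mu_n}{\pi}\right) = -\int \left[\nabla \log \pi(x)\, k(x,\cdot) + \nabla_x k(x,\cdot)\right] d\mu_n(x),
\]
so that $I_{Stein}(\mu_n|\pi)$ is exactly the squared $\cH$-norm of this quantity. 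It therefore suffices to bound the $\cH$-norm of the integral on the right.

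First I would move the norm inside the integral by convexity (Jensen's inequality) and then split via the triangle inequality into a term involving $\nabla \log \pi(x)\, k(x,\cdot)$ and a term involving $\nabla_x k(x,\cdot)$. For the first term, the product structure of $\cH$ and the reproducing property give the factorization $\|\nabla \log \pi(x)\, k(x,\cdot)\|_{\cH} = \|\nabla \log \pi(x)\|\cdot\|k(x,\cdot)\|_{\cH_0}$, which combined with \Cref{ass:k_bounded} is at most $B\,\|\nabla \log \pi(x)\|$. The second term is bounded directly by $B$ using the gradient bound in \Cref{ass:k_bounded}. This reduces the problem to controlling $\int \|\nabla \log \pi(x)\|\, d\mu_n(x)$ uniformly in $n$.

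To finish, I would use the $M$-Lipschitz continuity of $\nabla \log \pi$ to linearize, writing $\|\nabla \log \pi(x)\| \le \|\nabla \log \pi(0)\| + M\|x\|$ — exactly the estimate already used in the proof of \Cref{eq:semi-convexity} — and then invoke the uniform first-moment bound $\sup_n \int \|x\|\, d\mu_n(x) < \infty$. Assembling the pieces gives
\[
I_{Stein}(\mu_n|\pi) \le B^2\Bigl(\|\nabla \log \pi(0)\| + M\sup_n\!\int \|x\|\, d\mu_n(x) + 1\Bigr)^2 =: C,
\]
which is independent of $n$, proving \Cref{ass:bounded_I_Stein}. The argument is essentially routine once the integral representation is in hand; the only step requiring any care is the linearization of the score, which is precisely where the first-moment hypothesis enters, explaining why it is listed among the assumptions.
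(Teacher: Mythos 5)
Your proof is correct, and it takes a genuinely different (and somewhat more streamlined) route than the paper's. Both arguments start from the same integral representation $S_{\mu_n}\nabla\log\left(\frac{\mu_n}{\pi}\right) = -\int[\nabla\log\pi(x)k(x,\cdot)+\nabla_x k(x,\cdot)]d\mu_n(x)$ and use the same ingredients (\Cref{ass:k_bounded}, the linearization $\Vert\nabla\log\pi(x)\Vert\le\Vert\nabla\log\pi(0)\Vert+M\Vert x\Vert$, and the uniform first-moment bound). The paper, however, expands the squared $\cH$-norm \emph{bilinearly}: using the reproducing property and integration by parts it rewrites $I_{Stein}(\mu_n|\pi)$ as the closed-form Kernel Stein Discrepancy double integral
\begin{equation*}
\int \nabla_1\cdot\nabla_2 k(x,y)\,d\mu_n(x)d\mu_n(y) + 2\int\ps{\nabla\log\pi(y),\nabla_1 k(x,y)}d\mu_n(x)d\mu_n(y) + \int\ps{\nabla\log\pi(x),\nabla\log\pi(y)}k(x,y)\,d\mu_n(x)d\mu_n(y),
\end{equation*}
and then bounds each of the resulting terms separately. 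You instead keep $I_{Stein}(\mu_n|\pi)$ as the squared $\cH$-norm of a single $\cH$-valued Bochner integral, pull the norm inside by Jensen's inequality, and use the factorization $\Vert\nabla\log\pi(x)k(x,\cdot)\Vert_{\cH}=\Vert\nabla\log\pi(x)\Vert\,\Vert k(x,\cdot)\Vert_{\cH_0}$ (which follows from the product structure of $\cH$ alone; the reproducing property is not actually needed there). Your route avoids the four-term bookkeeping and produces an explicit constant in essentially one line; the paper's route has the side benefit of passing through the standard computable KSD formula, which is of independent interest, though its concluding sentence is written less cleanly than yours (it asserts boundedness of the first moment, which is the hypothesis, where it should conclude boundedness of $I_{Stein}$). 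Both proofs share the same implicit prerequisite, namely the validity of the integration-by-parts representation \eqref{eq:svgd_vector_field}, so no gap is introduced on that account.
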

\begin{proof}
	For any $\mu$, we have :
	\begin{equation*}
	I_{Stein}(\mu|\pi)=\ps{\int \nabla \log \pi(x)k(x,.) + \nabla_1 k(x,.)d\mu(x), \int \nabla \log \pi(y)k(y,.) + \nabla_1 k(y,.)d\mu(y)}_{\cH}
	\end{equation*}
	Using the reproducing property and integration by parts it is possible to write $I_{Stein}(\mu|\pi)$ as:
	\begin{align*}
	I_{Stein}(\mu|\pi) =& \int \nabla_1.\nabla_2 k(x,y)d\mu(x)d\mu(y)\\ &+ \int \ps{\nabla \log \pi(y),\nabla_1 k(x,y)}d\mu(x)d\mu(y)+\int \ps{\nabla \log \pi(x),\nabla_1 k(y,x)}d\mu(x)d\mu(y)\\ 
	&+ \int \ps{\nabla \log \pi(x),\nabla \log \pi(y)} k(x,y)d\mu(x)d\mu(y).
	\end{align*}
	The terms involving the kernel are easily bounded since the kernel is bounded  with bounded derivatives.
		Using that $\nabla \log \pi$ is $M$-Lipschitz, it is easy to see that
		\begin{align}
		\Vert \nabla \log \pi(x) \Vert \leq \Vert \nabla \log \pi(0) \Vert + M \Vert x\Vert.	
		\end{align}
		Using the above inequality, one can directly conclude  that  $\int \Vert x \Vert \mu_n(x)dx $ remains bounded.
\end{proof}

\section{Experiments}\label{sec:experiments}

We downloaded and reused the code (in Python) from \cite{liu2016stein} available at
\url{https://github.com/dilinwang820/Stein-Variational-Gradient-Descent} for our experiments. It implements a toy example with a 1-D Gaussian mixture and a gaussian kernel. In the upper figures, the blue dashed lines are the target density function and the solid green lines are the densities of the (200) particles at different iterations of our algorithm (estimated using kernel density estimator). The lower figures represent the evolution of $I_{Stein}(\hat{\mu}_n|\pi)$ and $\hat{\KL}(\hat{\mu}_n|\pi)$\footnote{where the $\KL(\hat{\mu}_n|\pi)$ is estimated with \textit{scipy.stats.entropy}.} along iterations $n\ge0$. 
One can see on the upper figures that the particles recover the target distribution.
On the lower  figure (in log-log scale) , one can see that the average $I_{Stein}$ over $n$ iterations (i.e. $1/n\sum_{k=1}^n I_{Stein}(\hat{\mu}_k|\pi)$) decreases at rate $1/n$ as predicted in \Cref{cor:istein_decreases}. The code to reproduce our results is available : \url{https://github.com/akorba/SVGD_Non_Asymptotic}.

\begin{figure}[ht!]
	\[
	\begin{array}{cc}
	\includegraphics[width=.5\linewidth]{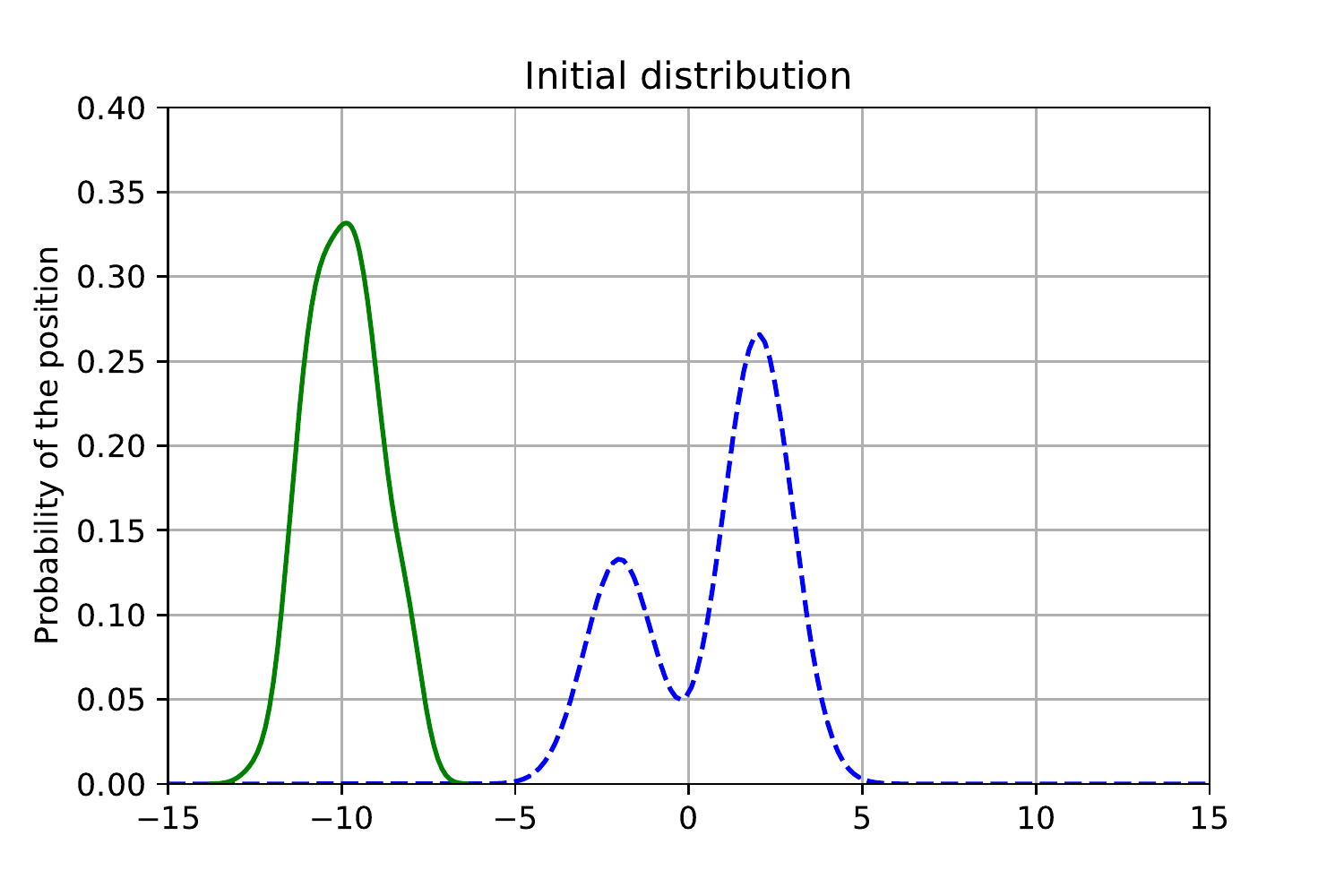} &
	\includegraphics[width=.5\linewidth]{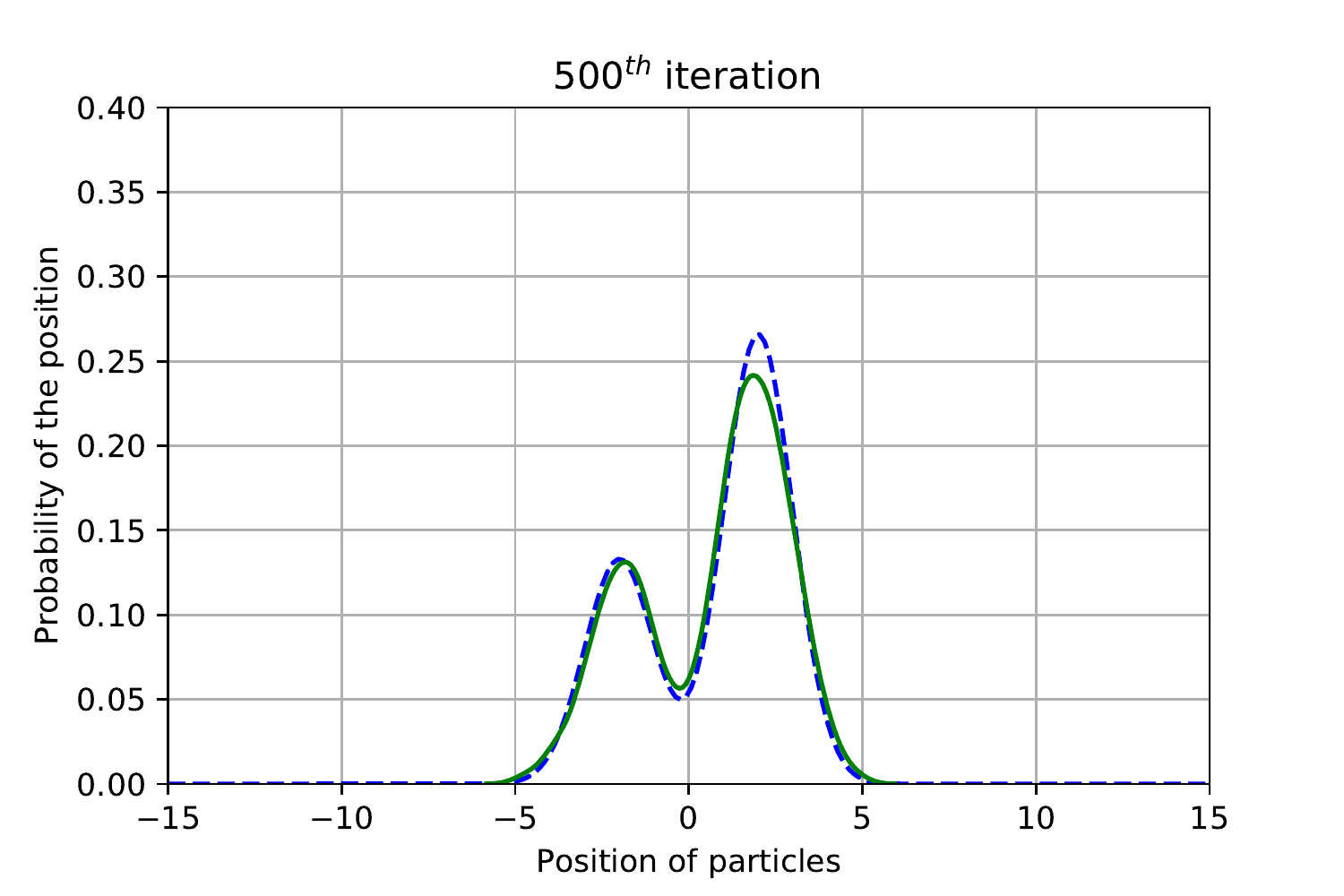}\\
	\end{array}
	\]
	\centering
		\includegraphics[width=.5\linewidth]{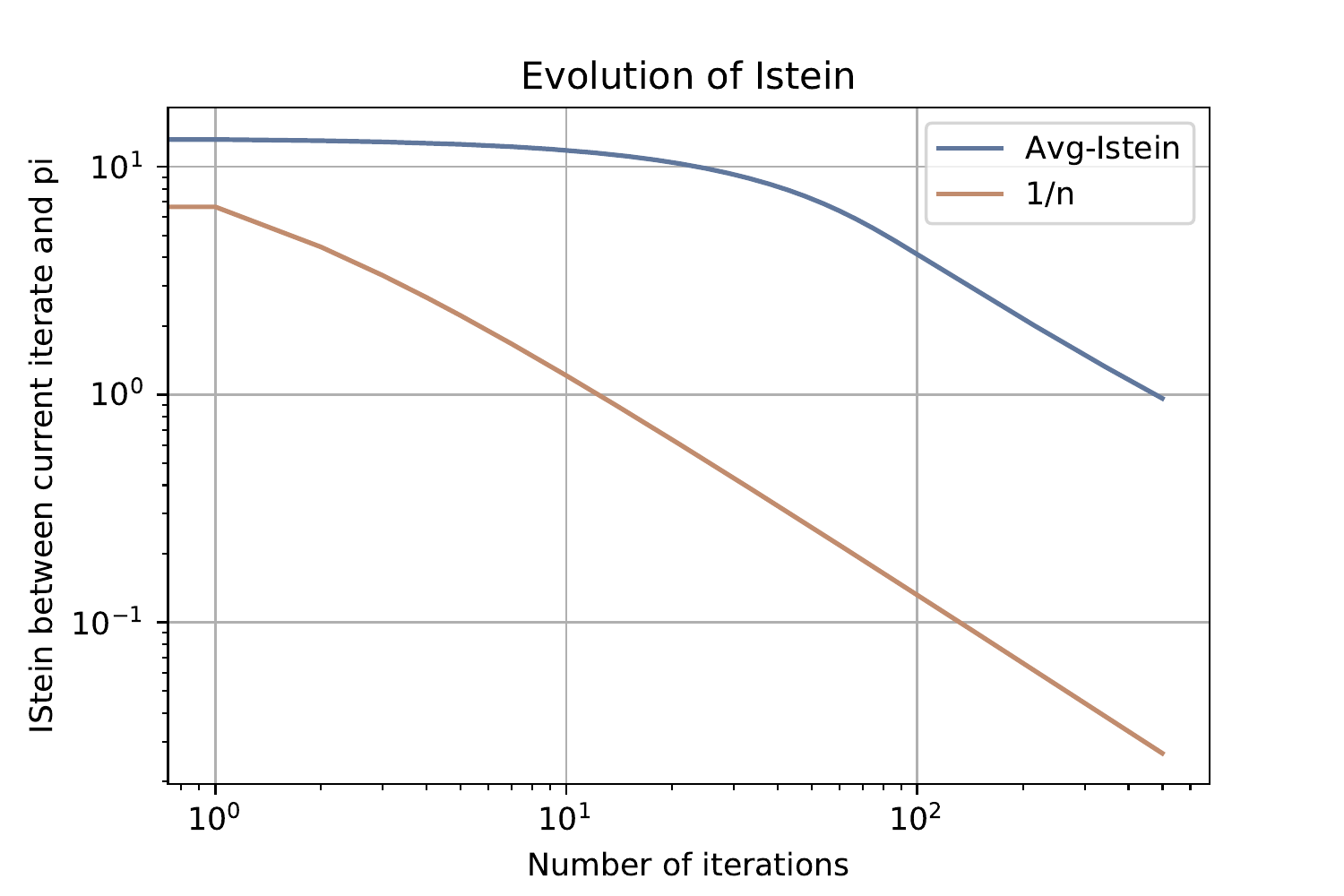}
	\caption{The particle implementation of the SVGD algorithm illustrates the convergence of $I_{Stein}(\mu_n|\pi)$ 
		 to $0$.}
	\label{fig:particles}
\end{figure}

\end{document}